\newcommand{\RR}{\mathbb{R}}
\newcommand{\CC}{\mathbb{C}}
\newcommand{\FF}{\mathbb{F}}
\newcommand{\KK}{\mathbb{K}}
\newtheorem{prop}{Proposition}[section]
\newtheorem{RQ}[prop]{Question}
\newtheorem{theorem}[prop]{Theorem}
\newtheorem{lemma}[prop]{Lemma}
\newtheorem{corollary}[prop]{Corollary}
\theoremstyle{definition}
\newtheorem{rem}[prop]{Remark}
\newcommand{\mfS}{\mathfrak{S}}
\title{  A Galois theorem for machine learning: Functions on symmetric matrices and point clouds via lightweight invariant features   }
\author[1]{Ben Blum-Smith}
\author[4]{Ningyuan (Teresa) Huang}
\author[2]{Marco Cuturi}
\author[1,3,4]{Soledad Villar}
\affil[1]{Department of Applied Mathematics and Statistics, Johns Hopkins University}
\affil[2]{Apple}
\affil[3]{Mathematical Institute for Data Science, Johns Hopkins University}
\affil[4]{Flatiron Institute, a division of the Simons Foundation}
\date{}                     
\begin{document}

\maketitle

\maketitle

\begin{abstract}
 In this work, we present a mathematical formulation for machine learning of (1) functions on symmetric matrices that are invariant with respect to the action of permutations by conjugation, and (2) functions on point clouds that are invariant with respect to rotations, reflections, and permutations of the points. To achieve this,   we provide a general construction of generically separating invariant features using ideas inspired by Galois theory.    We construct $O(n^2)$ invariant features derived from generators for the field of rational functions on $n\times n$ symmetric matrices that are invariant under joint permutations of rows and columns.  We show that these invariant features can separate all distinct orbits of symmetric matrices except for a measure zero set; such features can be used to universally approximate invariant functions on almost all weighted graphs. For point clouds in a fixed dimension, we prove that the number of invariant features can be  reduced, generically without losing expressivity, to $O(n)$, where $n$ is the number of points.  We combine these invariant features with DeepSets to learn functions on symmetric matrices and point clouds with varying sizes. We empirically demonstrate the feasibility of our approach on molecule property regression and point cloud distance prediction.

\end{abstract}

\section{Introduction} \label{sec:intro}
Many machine learning (ML) applications come equipped with intrinsic symmetry, and incorporating that knowledge into the design of models has often boosted end performance and efficiency. 
For example, by design, graph neural networks can only learn invariant or equivariant functions with respect to node relabeling; convolutional neural networks are translation invariant; while transformers are permutation invariant. 
As a result, the success of symmetry-informed models has spurred interest in group invariant and equivariant classes of functions. 
These developments have been particularly relevant for ML applied to scientific domains, such as molecular chemistry and physics, where known symmetries are imposed by physical law \cite{zhang2023artificial}. 

In invariant and equivariant ML, the hypothesis class of functions is parametrized in such a way that for every choice of parameters, the corresponding function satisfies the prescribed symmetries. 

The earliest models used tools immediately at hand, such as convolution (in convolutional neural networks) and message-passing, aka graph convolution (in graph neural networks) to achieve the desired symmetry. As invariant and equivariant ML has developed, however, there has been an increase in explicit engagement with the mathematical disciplines that have historically housed the study of group actions and their compatible maps---representation and invariant theory. 

In particular, invariant theory may be broadly defined as the study of function classes that obey prescribed symmetry. Consequently, researchers in equivariant and invariant ML have begun to plumb its depths for insights relevant to their applications. At the same time, ML brings a different set of priorities than traditional invariant theory---for example, invariant theory historically restricts its attention nearly exclusively to polynomial functions, whereas in ML, this is an unnatural and undesirable restriction. So there has been a dynamic interchange of ideas.

The present work fits into this program. We consider the problem of learning a permutation-invariant function on (node and edge) weighted graphs, or learning a function on point clouds that is invariant with respect both to permutations and orthogonal transformations. These two closely related learning problems have a wide variety of applications (see Section~\ref{sec:related-work} below). However, the underlying invariant theory is hard; indeed, a complete set of generating invariants would allow one to solve the graph isomorphism problem, so it is at least as hard as graph isomorphism (solvable in quasi-polynomial time \cite{babai2016graph}; unknown if it is solvable in polynomial time).

Recent exciting work of Hordan et. al.  and Rose et. al. \cite{hordan2023complete, rose2023three,  hordan2024weisfeiler} has proposed solutions to these learning problems that achieve universal approximation guarantees. For low-dimensional point cloud data (i.e., points in $\mathbb{R}^d$ for $d$ fixed), the computational complexity of these approaches is polynomial in the size $n$ of the point cloud; more precisely, $O(n^{d+1})$ in \cite{hordan2023complete, rose2023three} and $O(n^d)$ in \cite{hordan2024weisfeiler}. Still, in large-scale data applications, e.g., to astronomy, $n$ can be on the order of $10^{10}$, while $d=3$ \cite{ivezic2019lsst}. So the methods of \cite{hordan2023complete, rose2023three, hordan2024weisfeiler} are prohibitively computationally expensive in such cases.

Motivated by such considerations of computational scalability, we propose a relaxation of universal approximation. At the cost of throwing out a closed, measure zero set of ``bad" point clouds (respectively, weighted graphs), we provide an approach to these learning problems that has a universal approximation guarantee on the remaining ``good" \emph{generic} point clouds (respectively, weighted graphs), and has  computational cost bounded by a constant multiple of the size of the input data.

Our method is to look for computationally tractable invariants that contain the same information as a set of generators for the field of invariant rational functions on the data. It is known that such field generators separate orbits \emph{generically}, that is, away from a Zariski-closed (and thus Euclidean-closed and measure zero) ``bad" subset; thus the proposed invariants do as well. 
  The mathematical techniques we employ are general. If $\Gamma$ is a group acting on a topological measure space $X$, and $G$ is a finite index subgroup of $\Gamma$, we provide a recipe to extend a set of invariants that are generically separating for $\Gamma$ to a set of generically separating invariants for $G$. The construction is inspired by the fundamental theorem of Galois theory, but it is not restricted to polynomials. In particular, it applies to classes of functions generally used in machine learning.
  
A standard argument based on the Stone-Weierstrass theorem then implies that the proposed invariants are universally approximating away from the ``bad" subset. 

The number of invariants is $O(n^2)$ for an $n$-node weighted graph. Since such a graph has $O(n^2)$ weights, this is tight information-theoretically. However, for point clouds in a fixed $\mathbb{R}^d$, some further work is needed to get the number of proposed invariants from $O(n^2)$ down to $O(n)$. To achieve this, we use a low-rank matrix completion result \cite{hamm2020perspectives}, and recent work on computable low-dimensional embeddings of quotient spaces \cite{dym2024low, tabaghi2023universal}; this involves throwing out another ``bad" subset, but it remains measure-zero and closed.

We formally introduce the group actions of interest in this paper. Suppose $X = (X_{ij})$ is a real symmetric $n\times n$ matrix, and $\mathcal{S}(n)$ is the vector space of all such matrices. The symmetric group $\mfS_n$ on $n$ points acts on $\mathcal{S}(n)$ by
\begin{equation}
\pi X := P_\pi X P_\pi^\top, \label{eq.symmetry}
\end{equation}
where $P_\pi$ is the permutation matrix corresponding to the permutation $\pi$. (Throughout this work, when we refer to an action of $\mfS_n$ on $\mathcal{S}(n)$, it is this action unless otherwise noted.) This is the action corresponding to node reordering in a graph. Specifically, an undirected weighted graph $G$ on $n$ nodes can be represented by a symmetric $n \times n$ matrix $X$, known as the adjacency matrix, where $X_{ij}$ describes the edge weight between nodes $i$ and $j$. (The diagonal entries $X_{ii}$ can be viewed as node weights.) When $X$ is binary and the diagonal is zero, we recover the class of simple undirected graphs. 

The order of the nodes is not an intrinsic property of the graph, but rather a choice; therefore, the representation of the graph as a matrix is not unique. The orbit of $X$ under the action of $\mfS_n$ consists of all adjacency matrices corresponding to the graph $G$. In other words, we can identify the space of weighted graphs with $\mathcal{S}(n)/\mfS_n$, where $\mathcal{S}(n)/\mfS_n$ is the quotient of $\mathcal{S}(n)$ by the group action.

The action \eqref{eq.symmetry} is an important group action in the study of point clouds as well. If $V \in \mathbb R^{d \times n}$ is a point cloud of $n$ points in $\mathbb R^d$, then the first fundamental theorem of invariant functions for the orthogonal group states that a function is invariant with respect to orthogonal transformations (i.e. invariant with respect to rotations and reflections) if and only if it is a function of the Gram matrix $X = V^\top V$ \cite[Theorem~2.9A]{weyl1946classical}. A function is invariant with respect to both orthogonal transformations and permutations of the point cloud $V$ if and only if it is a function of $X= V^{\top}V$ that is invariant with respect to the action \eqref{eq.symmetry}. 

The main contributions of this work are as follows:
\begin{itemize}
\item
 
We provide a ``Galois'' theorem for machine learning. If $\Gamma$ is an arbitrary group acting on a topological measure space $X$ and $G$ is a finite index subgroup of $\Gamma$, the theorem shows how to extend a set of invariants that are generically separating for $\Gamma$ to a set of generically separating invariants for $G$. The construction is similar to the fundamental theorem of Galois theory, but it is not restricted to polynomials. In particular, it applies to various classes of functions used in machine learning.
  
    \item   Using our Galois theorem,    we provide $O(n^2)$ invariant features that generically separate real symmetric matrices with respect to the action \eqref{eq.symmetry} (Theorem~\ref{thm:generic-separators}), so they can be used to universally approximate invariant functions on almost all such matrices (Proposition~\ref{prop:universal-approx-for-matrices}).   The construction hinges on an idea like one that appears in \cite{thiery2000algebraic} for a slightly different setting. 
      
    \item We show that the same features, applied to the Gram matrix of a point cloud, also generically separate (and therefore generically universally approximate invariant functions on) point clouds $V\in \RR^{d\times n}$ with respect to the natural $\mathrm{O}(d)\times \mfS_n$ action (Theorem~\ref{thm:Ond-features} and Proposition~\ref{prop:low-rank-universal}).   The result is not surprising, but nevertheless the proof is not trivial and is new to the best of our knowledge.
      
    \item To improve efficiency, we prove that in the point cloud setting we can express an equivalent class of functions using $O(n)$ invariant features for fixed $d$ (Proposition~\ref{prop:Ond-univ-approx}).   Using ideas from low-rank matrix completion, we show that one can use $O(n)$ features to reconstruct the $O(n^2)$ generically separating invariants from above (Theorem \ref{thm:Ond-features}).   
    \item We use the theoretical results described above, in combination with  DeepSets \cite{zaheer2017deep}, to define invariant ML models on symmetric matrices and point clouds that can be applied to varying-size inputs. To achieve universality, the size of the model depends on an upper bound for the size of the input (Section \ref{sec:DeepSets}).

    \item We illustrate the feasibility of this method in two examples (Section~\ref{sec:experiments}). We show that the invariant features can be used to (1) learn molecular properties, and (2) predict the Gromov-Wasserstein distance between point clouds. All the data and code are available in this repository:
\begin{quote}\texttt{https://github.com/nhuang37/InvariantFeatures}.
    \end{quote}
    
\end{itemize}

Above, we have distinguished typographically between big-$O$ notation and the orthogonal group $\mathrm O(d)$. We preserve this typographical distinction throughout.

\section{Related work}\label{sec:related-work}

\textbf{Invariant theory.} Describing the class of functions invariant under a group action is historically the project of invariant theory. Other works to consider, from the point of view of invariant theory, the same or closely related group actions to the ones discussed above  (i.e., on weighted graphs or point clouds), include \cite{aslaksen1996invariants, thiery2000algebraic, kemper2022separating, gripaios2021lorentz, haddadin2021invariant}; the last two of these have applications to ML in mind. These papers compute complete sets of invariants (i.e., generators for the algebra of invariant polynomials), or separating sets (i.e., invariant polynomials whose evaluation can classify orbits to the extent possible) in special cases. However, a complete or even a separating set of invariants for general $n,d$ is expected to be computationally intractable, because such a set would solve the graph isomorphism problem, for which the fastest known algorithm has quasi-polynomial time complexity. The starting point for the present investigation was the recognition that the method of \cite[Theorem~11.2]{thiery2000algebraic}---which gives generators for the field of rational invariants in a closely related situation where the algebra generators remain out of reach---could be used to produce a set of computationally tractable invariants that uniquely identifies a generic (though not a worst-case) symmetric matrix, up to the $\mfS_n$ conjugation action, and is not meaningfully bigger than the size of the input data. 
Other works that use field generation as a relaxation of algebra generation to lower a computational hurdle include \cite{bandeira2017estimation, blum2023degree}. More general work on computation of fields of rational invariants includes \cite{muller1999calculating, kemper2007computation, hubert2007rational, hubert2012rational, kamke2012algorithmic, hubert2016computation, kemper2016using}.

\textbf{Signal processing with symmetry.} There is a line of research in statistical signal processing that considers a signal that has been corrupted both by noise and by transformations randomly selected from a compact group. Examples include {\em multi-reference alignment} \cite{mra-sdp, AbbPerSin17, PerWeBan17, bandeira2020optimal, bendory2022sparse, abas2022generalized}, {\em cryo-electron microscopy} \cite{ss-cryo, sigworth2016principles, singer2018mathematics, bendory2020single, fan2021maximum}, and their generalizations \cite{bendory2022dihedral, bandeira2017estimation, edidin2023orbit, bendory2024sample}. In these works, recovering a generic signal has emerged as an important relaxation of recovering a worst-case signal because it often allows for dramatic savings in computational cost. This inspired  the present approach.

\textbf{Non-polynomial separating invariants for invariant ML.} While invariant theory traditionally trades exclusively in polynomial invariants, applications to machine learning  have motivated the study of non-polynomial invariants with improved numerical stability properties. Sets of separating invariants derived from polynomial separating sets but renormalized to improve numerical stability were proposed in \cite{cahill2020complete, cahill2024stable}. Invariants based on mapping to a canonical orbit representative were proposed in \cite{olver2023invariants}. For the canonical action of the symmetric group by permutations of the coordinates of a real vector space, such a canonical representative is found by sorting; an elaboration on this idea for the action of the same group on several copies of the canonical representation is proposed in \cite{balan2022permutation}. A uniform construction of separating invariants which are convex in general, piecewise-linear for finite groups, and can often be made bi-Lipschitz, was proposed in \cite{cahill2022group},  studied further in \cite{mixon2022injectivity, mixon2023max}, and generalized in \cite{balan2023i, balan2023g}. General questions concerning the existence and optimum distortion of bi-Lipschitz separating invariants were considered in \cite{cahill2023bilipschitz}. Although the present work is not directly concerned with the Lipschitz property, considerations of numerical stability do motivate our use of non-polynomial invariants.

\textbf{Machine learning on weighted graphs.} 
Graph neural networks (GNNs) have emerged as a popular ML tool for learning functions on graphs \cite{hamilton2020graph, bronstein2021geometric}. GNNs are designed to satisfy permutation invariance (for graph-level output), where the choice of enforcing invariance leads to a tradeoff between efficiency and expressivity. For example, one of the most efficient GNN architectures is the class of message-passing GNNs \cite{kipf2016semi, velickovic2017graph, gilmer2017neural}, which uses the same function to process each node neighborhood. However, message-passing GNNs can suffer from expressivity issues when learning global properties of graphs. The expressivity of GNNs has been extensively studied using the graph isomorphism test and comparison with the Weisfeiler-Lehman (WL) algorithms \cite{xu2018powerful, morris2019weisfeiler}---a hierarchy of combinatorial graph invariants (See \cite{huang2021short} for an introduction and \cite{JMLR:v24:22-0240} for a comprehensive survey). To mitigate the expressivity issues, more powerful GNNs have been proposed, including higher-order GNNs \cite{morris2019weisfeiler, maron2018invariant, puny2023equivariant}, subgraph-based GNNs \cite{chen2020can, Thiede2021autobann, bevilacqua2021equivariant, bouritsas2022improving, frasca2022understanding}, and spectral GNNs \cite{lim2022suvrit, huang2022local}, yet they typically incur higher computation costs due to higher-order tensor operations or additional preprocessing. By using intractable high-order tensors, universality results are established in \cite{pmlr-v97-maron19a} for invariant functions and equivariant functions in \cite{keriven2019universal}.\footnote{Intractability is expected here, as it is unknown whether the graph isomorphism problem can be solved in polynomial time.} In contrast, we relax universality to almost-everywhere universality in order to get the number of features to the same order as the size of the data.

\textbf{Machine learning on point clouds. } Machine learning on point clouds has recently attracted increasing attention due to its wide applications in many areas, such as computer vision, material sciences, cosmology, and autonomous driving. Earlier approaches for classification or segmentation tasks mainly focus on modeling permutation invariance among the points \cite{qi2017pointnet, qi2017pointnet++}, whereas recent works \cite{poulenard2019effective, pozdnyakov2024smooth} further enforce invariance from the orthogonal group. Enforcing joint permutation and Euclidean symmetry has been shown to achieve competitive performance for point cloud classification and segmentation \cite{deng2021vector, melnyk2022tetrasphere}, shape alignment \cite{chen2021equivariant}, shape reconstruction \cite{chatzipantazis2023mathrmseequivariant}, and dynamical system modeling \cite{satorras2021n}. While these works demonstrate promising results empirically, none of them provides universal approximation guarantees.

Universality results on approximating invariant or equivariant functions on point clouds (with respect to both permutation and Euclidean symmetries) have been established using polynomials \cite{dym2020universality}, complete distances \cite{widdowson2023recognizing}, or a geometric variant of the Weisfeiler-Lehman test \cite{pozdnyakov2022incompleteness, joshi2023expressive, hordan2023complete, rose2023three, hordan2024weisfeiler}. Moreover, polynomial-time algorithms to achieve universality on point clouds are discussed in these works (as opposed to intractable algorithms for the case of weighted graphs above; this results from the fact that the data is confined to $\mathbb{R}^d$ for fixed $d$). More concretely,  \cite{hordan2023complete} proposed an invariant network inspired by the $2$-WL test that separates all point clouds in $\RR^3$ with $O(n^4)$ computational complexity, which is further extended to $\RR^d$ for any $d > 1$ with complexity $O(n^{d+1})$ in \cite{hordan2023complete, rose2023three}, and improved in \cite{hordan2024weisfeiler} to $O(n^d)$ complexity (using a different network architecture motivated by \cite{maron2019provably}). Most related to our approach, \cite{hordan2023complete} also showed that a more efficient architecture inspired by the $1$-WL test, with $O(n^2)$ computational complexity (for $d=3$), can achieve universality except for an explicitly-described measure-zero set. We reduce the computational complexity to $O(n)$ by applying ideas from low-rank matrix completion.

\textbf{Enforcing symmetries in machine learning models.} More generally, invariant and equivariant machine learning is a very active research area, where researchers incorporate symmetries into the design of machine learning models \cite{bronstein2021geometric, weiler2021coordinate, villar2023towards}. There are many different approaches, including orbit averaging \cite{elesedy2021provably}, frame averaging \cite{puny2021frame}, representation theory \cite{kondor2018clebsch, geiger2022e3nn, fuchs2020se}, group convolutions \cite{cohen2016group, kondor2018generalization, cohen2019general}, weight sharing \cite{ravanbakhsh2017equivariance}, canonicalization \cite{bloem2020probabilistic,  kaba2023equivariance, aslan2023group,olver2023invariants}, and invariant theory \cite{blum2022machine, villar2021scalars, gripaios2021lorentz, haddadin2021invariant, puny2023equivariant, villar2023dimensionless}, among others. The present work considers the invariant theory approach. Specifically, we leverage the invariants of a compact Lie group to obtain invariants of a (finite index) subgroup. Our construction is somewhat related to recent works on \emph{symmetry breaking} \cite{smidt2021finding, lawrence2024improving, xie2024equivariantsymmetrybreakingsets}, which reduces the symmetries of an equivariant function by augmenting the input to the function with additional symmetry-breaking parameters. Such symmetry-breaking parameters can be learned \cite{smidt2021finding}, sampled from a set based on input and output data symmetries \cite{xie2024equivariantsymmetrybreakingsets}, or random noise (for full symmetry breaking) \cite{lawrence2024improving}.

\section{A ``Galois" theorem for machine learning applications}

Let $X$ be a topological measure space with an action by a group $\Gamma$. In our applications, $X$ is a real vector space and $\Gamma$ is a compact Lie group acting orthogonally, although for the purposes of this section, we do not need the action to respect the measure, or even the topological structure; the only purpose these structures serve is to give subsets of $X$ a notion of ``closed and measure zero". Let $f_1,\dots,f_r$ be a set of $\Gamma$-invariant functions from $X$ to some abelian group $\FF$ (which in machine learning applications will generally be $\RR$ or perhaps $\CC$). We say that $f_1,\dots,f_r$ are {\em generically separating for $\Gamma$} if there exists a closed, measure zero, $\Gamma$-invariant subset $B\subset X$ (the ``bad set") such that  $x_1,x_2\in X\setminus B$ must lie in the same orbit of $\Gamma$ if $f_j(x_1)=f_j(x_2)$ for all $j$. 

Now let $G\subset \Gamma$ be a subgroup of finite index. In this section, we prove an elementary theorem that gives a condition on some $G$-invariant functions $f^\star_1,\dots,f^\star_s:X\rightarrow \FF$ so that if $\Gamma$-invariants $f_1,\dots,f_r$ are  generically separating for $
\Gamma$, then also $f_1,\dots,f_r,f^\star_1,\dots,f^\star_s$ are generically separating for $G$. We use this theorem in Section~\ref{sec:invariants-on-S(n)} to prove that certain small sets of invariants generically separate orbits for the actions of interest in this paper.

The theorem proven here can be seen as an analogue to a certain basic theorem of Galois theory, adapted for machine learning purposes.  Indeed, we use Galois theory to prove a parallel result in Section~\ref{sec:inv-clouds}, and in an earlier draft of the present work, we also used Galois theory for the  Section~\ref{sec:invariants-on-S(n)} result. The latter proof is retained in the Appendix for interest.

Suppose $\mathcal{F}$ is a class of functions $X\rightarrow \FF$ that is an abelian group under pointwise addition, is closed under the natural action of $\Gamma$ by $(\gamma f)(x):=f(\gamma^{-1}x)$ for $x\in X, \gamma\in \Gamma$, and such that the nonzero elements of $\mathcal{F}$ have closed, measure-zero vanishing sets. In our applications, $\mathcal{F}$ is the class of polynomial functions. But, for example, if $X$ is a $\CC$- or $\RR$-vector space and $G$ a linear group, then the class of analytic functions also has these properties, and one could also use other classes.\footnote{ For readers who may have been puzzled at the fact that we do not impose any assumptions on the way $G$'s action on $X$ interacts with the latter's topological or measure structure: it is because of the assumptions on $\mathcal{F}$. If $G$'s action on $X$ is very disruptive of the topological and measure structures, then these assumptions on $\mathcal{F}$ are extremely restrictive.} For a natural number $s$, denote $[s]:=\{1,\dots,s\}$. The promised ``Galois theorem" for machine learning is as follows:

\begin{theorem}\label{thm:galois-for-ML}
    Let $X,\Gamma, G,\mathcal{F}$ be as above.  Suppose $f_1,\dots,f_r: X\rightarrow\FF$ are $\Gamma$-invariant functions that are generically separating for $\Gamma$. If $f^\star_1,\dots,f^\star_s:X\rightarrow\FF$ are $G$-invariant functions belonging to $\mathcal{F}$, such that for $\gamma\in \Gamma$ we have
    \[
    \gamma f^\star_j = f^\star_j\text{ for all }j\in [s]\quad \Rightarrow \quad \gamma\in G,
    \]
    then $f_1,\dots,f_r,f^\star_1,\dots,f^\star_s$ are generically separating for $G$.
\end{theorem}

Note that the hypothesis does {\em not} require $f_1,\dots,f_r$ to belong to $\mathcal{F}$, only $f^\star_1,\dots, f^\star_s$. Below, we will apply this theorem in the special case that $s=1$, so there is a single $f^\star$ that is fixed only by the elements of $G$.

\begin{proof}
    Because $f^\star_1,\dots,f^\star_s$ are $G$-invariant,  the stabilizer $\Gamma_j$ of each $f_j$ in $\Gamma$ contains $G$. The hypothesis on the $f^\star_j$'s can be summarized as
    \[
    \bigcap_j \Gamma_j = G.
    \]
    Because, furthermore, $[\Gamma:G]<\infty$, each $\Gamma_j$ has finite index in $\Gamma$. Thus, there are only finitely many functions of the form $\gamma f^\star_j$, $\gamma\in \Gamma$, namely one for each pair $(j, \gamma \Gamma_j)$ consisting of $j\in [s]$ and a left coset of the stabilizer $\Gamma_j$. They all belong to $\mathcal{F}$, because $\mathcal{F}$ is $\Gamma$-stable. Thus, the finitely many functions
    \[
    \gamma f^\star_j - f^\star_j, \; j\in[s], \;\gamma\notin \Gamma_j
    \]
    all belong to $\mathcal{F}$ as well (because it is an abelian group). Furthermore, they are all nonzero because $\gamma\notin \Gamma_j$ for each one. So by the hypothesis on $\mathcal{F}$, they all have closed, measure zero vanishing sets. Let 
    \begin{align} \label{eq:explicit-bad-set} 
    B=\bigcup_{j\in[s], \gamma\in \Gamma\setminus \Gamma_j}\{x\in X:(\gamma f^\star_j-f^\star_j)(x)=0\}
    \end{align}    
    be the union of these; by the above, this is a union of only finitely many distinct closed, measure zero sets, thus it is closed and measure zero. 
    
    Meanwhile, because $f_1,\dots,f_r$ are generically separating for $\Gamma$, we know that there exists another closed, measure zero set $B'$ on the complement of which any two distinct $\Gamma$-orbits are separated by some $f_j$.

    Then $B\cup B'$ is closed and measure zero, and we claim that any $x_1,x_2 \in X\setminus(B\cup B')$ lying in distinct $G$-orbits are separated either by some $f_j$ or by some $f^\star_j$. Indeed, if $x_1,x_2$ lie in distinct $\Gamma$-orbits, then they are separated by some $f_j$; while if they lie in the same $\Gamma$-orbit but distinct $G$-orbits, then there exists $\gamma\in \Gamma\setminus G$ with $x_1 = \gamma x_2$. In the latter case, there exists $j\in [s]$ with $\gamma \notin \Gamma_j$ because $G=\bigcap_j\Gamma_j$, and then 
    \begin{align*}
    f^\star_j(x_1) - f^\star_j (x_2) &= f^\star_j(\gamma^{-1}x_2) - f^\star_j(x_2)\\
    &= (\gamma f^\star_j-f^\star_j)(x_2)\\
    &\neq 0,
    \end{align*}
    where the final inequality is because $x_2\notin B$ (and $B$ contains the vanishing set of $\gamma f^\star_j - f^\star_j$ by definition). So $x_1,x_2$ are separated by $f^\star_j$.
\end{proof}

Here is the sense in which this is a Galois theory analogue. Let $\FF$ be a field (which we think of as a ground field), and let $\alpha_1,\dots,\alpha_r$ be elements of some (unspecified) extension of $\FF$. Let $\mathbb{K}$ be a finite Galois extension of $\FF(\alpha_1,\dots,\alpha_r)$. This extension has a Galois group, say $\Gamma$, and $\FF(\alpha_1,\dots,\alpha_r)=\mathbb{K}^\Gamma$. Now let $G\subset \Gamma$ be a subgroup of $\Gamma$. Then $\mathbb{K}^G$ is a finite extension of $\FF(\alpha_1,\dots,\alpha_r)$, and we may be interested in identifying generators for this extension. If we can find $\alpha^\star_1,\dots,\alpha^\star_s\in \mathbb{K}^G$ such that 
\[
\gamma \alpha^\star_j = \alpha^\star_j\text{ for all }j\in [s]\quad \Rightarrow \quad \gamma\in G,
\]
then the Fundamental Theorem of Galois Theory tells us that $\alpha^\star_1,\dots,\alpha^\star_s$ generate $\KK^G$ as an extension of $\FF(\alpha_1,\dots,\alpha_r)$, or in other words, that
\[
\FF(\alpha_1,\dots,\alpha_r,\alpha^\star_1,\dots,\alpha^\star_s) = \mathbb{K}^G.
\]
In this analogy, the hypothesis that $\alpha_1,\dots,\alpha_r$ generate $\mathbb{K}^\Gamma$, and the conclusion that $\alpha_1,\dots,\alpha_r,\alpha^\star_1,\dots,\alpha^\star_s$ generate $\mathbb{K}^G$, play the role of the hypothesis that $f_1,\dots,f_r$ are generically separating for $\Gamma$ and the conclusion that $f_1,\dots,f_r,f^\star_1,\dots,f^\star_s$ are generically separating for $G$ in Theorem~\ref{thm:galois-for-ML} above. 

This connection is somewhat more than an analogy, because if $\KK$ happens to be the function field of an algebraic variety $X$ over an algebraically closed field $\FF$ of characteristic zero, and  $\alpha_1,\dots,\alpha_r,\alpha^\star_1,\dots,\alpha^\star_s$ are rational functions on $X$, then Rosenlicht's Theorem \cite[Theorem~2]{rosenlicht1956some} tells us that they generate the invariant field  $\KK^\Gamma$, respectively $\KK^G$, if and only if they are generically separating for $\Gamma$, respectively $G$. So Theorem~\ref{thm:galois-for-ML} is actually a consequence of the Fundamental Theorem of Galois Theory plus Rosenlicht's theorem in this special case. 

But the proof of Theorem~\ref{thm:galois-for-ML} shows that the conclusion holds for an elementary reason that evades this algebraic machinery, so it can be applied in situations where the functions $f_j$ and $f^\star_j$ are not rational functions, or $X$ is not an algebraic variety; this flexibility is the source of its potential usefulness to machine learning theory. It also gives an explicit description of the ``bad set" $B$ introduced by the $f^\star_j$'s (equation~\eqref{eq:explicit-bad-set}).

\section{Invariant functions on symmetric matrices}\label{sec:invariants-on-S(n)}

In this section, we give a set of $O(n^2)$  (specifically, $\binom{n+1}{2}+1$) functions that are reasonable to compute in practice and that, in a sense to be made precise momentarily, {\em almost} universally approximate functions on $\mathcal{S}(n)$.

For $k=1,\dots,n$, let 
\[
f^d_k(X)= \text{the $k$th largest of the numbers } X_{11},\dots,X_{nn},
\]
in other words, the $k$th element in a sorted list of these numbers. For $\ell=1,\dots,\binom{n}{2}$, let 
\[
f^o_\ell(X)=\text{the $\ell$th largest of the numbers }X_{ij},\; 1\leq i < j \leq n.
\]
(The superscripts $d$ and $o$ stand for ``diagonal" and ``off-diagonal".) Finally, let 
\[
f^\star(X) = \sum_{i\neq j} X_{ii}X_{ij}.
\]
Collectively we refer to these functions as {\em the $f$'s} (or {\em the $f^d$'s} or {\em the $f^o$'s} for the corresponding subcollections). Note that computation of the $f^d$'s can be accomplished with time complexity $O(n\log n)$ and space complexity $O(n)$ or better, while computation of the $f^o$'s can be done with time complexity $O(n^2\log n)$ and space complexity $O(n^2)$ or better. Meanwhile, by summing first over $j$, $f^\star(X)$ can be computed with time complexity $O(n^2)$ and space complexity $O(n)$. Thus, the computational complexity of the proposed $f$'s is bounded above (up to log factors) by the order of growth of the input data, which is $\Theta(n^2)$.

A {\em multi-layer perceptron} (the basic model for machine learning architectures) is a composition of functions between various finite-dimensional real  vector spaces, of the form
\[
\rho \circ L_{\ell} \circ \rho \circ \dots \circ \rho \circ L_1,
\]
where the $L_1,\dots, L_\ell$ are linear maps that are learned from data, and $\rho$ is a fixed, continuous but non-polynomial function $\RR\rightarrow \RR$ applied entrywise to the coordinates of the vectors. 

The purpose of this section is the following:

\begin{prop}\label{prop:universal-approx-for-matrices}
There is a closed, $\mfS_n$-invariant, measure zero set $B\subset \mathcal{S}(n)$ (the ``bad set"), such that any continuous, $\mfS_n$-invariant function $\mathcal{S}(n)\setminus B\rightarrow \RR$ defined on the complement of $B$ can be uniformly approximated on any compact subset by a multi-layer perceptron that takes the $f$'s as input.
\end{prop}

\begin{rem}
    The ``bad set" $B$ is defined by the vanishing of some polynomials; in other words, it is an algebraic subvariety of $\mathcal{S}(n)$.
\end{rem}

\begin{rem}
    The argument follows a standard pattern in machine learning of translating an orbit-separation result (Theorem~\ref{thm:generic-separators} below) into a universal approximation result via the Stone-Weierstrass theorem.
\end{rem}

\begin{proof}[Proof of Proposition~\ref{prop:universal-approx-for-matrices}]
All  of the $f^d$'s and all of the $f^o$'s are continuous because they are piecewise-linear; $f^\star$ is continuous as it is a quadratic polynomial. Meanwhile, all of the $f$'s are constant along orbits of $\mfS_n$. Thus, they induce continuous functions
\[
\mathcal{S}(n)/\mfS_n \rightarrow \RR,
\]
where $\mathcal{S}(n)/\mfS_n$ is the topological quotient of $\mathcal{S}(n)$ by the group action. Since $\mathcal{S}(n)$ is hausdorff and $\mfS_n$ is compact, the quotient is hausdorff \cite[Theorem~I.3.1]{bredon1972introduction}. Therefore, by the Stone-Weierstrass theorem applied to this quotient, the algebra generated by the functions induced on $\mathcal{S}(n)/\mfS_n$ by the $f$'s uniformly approximates any continuous function on any compact set on which these functions separate points in the quotient $\mathcal{S}(n)/\mfS_n$. Pulling back to $\mathcal{S}(n)$, this means that the algebra generated by the $f$'s uniformly approximates any continuous invariant function on any compact set where the $f$'s separate orbits. It is standard \cite[Theorem~1]{cybenko1989approximation} that a multi-layer perceptron trained on the $f$'s can approximate arbitrary continuous functions in the $f$'s, and in particular, any element of the algebra generated by the $f$'s. Thus, the proposition follows from the following theorem, which will complete the proof.
\end{proof}

\begin{theorem}\label{thm:generic-separators}
There is a closed, $\mfS_n$-invariant, measure zero set $B\subset \mathcal{S}(n)$ such that any two distinct orbits of $\mfS_n$ in the complement of $B$ are separated by some $f$.
\end{theorem}

We give a proof using Theorem~\ref{thm:galois-for-ML}. A previous version of this paper gave a Galois theory-based proof that is preserved in the Appendix.

\begin{proof}  
The proof uses the main idea from \cite[Theorem~11.2]{thiery2000algebraic}, and the above Theorem~\ref{thm:galois-for-ML}.    The $f^d$'s and $f^o$'s are invariant with respect to an arbitrary permutation of the $X_{ii}$'s and an arbitrary (unrelated) permutation of the $X_{ij}$'s ($i\neq j$); thus, they are invariants of a bigger group action on $\mathcal{S}(n)$ than the action of $\mfS_n$ described above. Specifically, let $\mfS_n$ act by arbitrary permutations on the $X_{ii}$'s while fixing the $X_{ij}$'s ($i\neq j$), and let $\mfS_{n(n-1)/2}$ act by arbitrary permutations on the $X_{ij}$'s for $i\neq j$ (that respect the relation $X_{ij}=X_{ji}$, i.e., that preserve the symmetry of the matrix $(X_{ij})$) while fixing the $X_{ii}$'s. Since these actions commute with each other (as they are by permutations supported on disjoint sets),  they induce an action by the direct product $\Gamma :=\mfS_n\times \mfS_{n(n-1)/2}$ on $\mathcal{S}(n)$. The $f^d$'s and $f^o$'s are invariant functions for this (bigger) action.

Furthermore, the $f^d$'s uniquely characterize the numbers $X_{11},\dots,X_{nn}$ up to order;  similarly,    the $f^o$'s uniquely characterize the $\binom{n}{2}$ numbers $X_{ij}$, $1\leq i<j\leq n$ up to order. Thus, taken as a whole, the $f^d$'s and $f^o$'s yield a set of orbit-separating invariants for the action by $\Gamma$ on $\mathcal{S}(n)$ described in the last paragraph. 

We would be done (with no need to exclude a ``bad set" $B$) if our interest were in this action by $\Gamma = \mfS_n\times\mfS_{n(n-1)/2}$. However, our actual interest is in the action of $\mfS_n$ given by $\pi X := P_\pi XP_\pi^\top$ per \eqref{eq.symmetry}. In the latter action, a permutation $\pi\in \mfS_n$ acts on both the $X_{ii}$'s and the $X_{ij}$'s at once by permuting the underlying index set over which $i$ and $j$ vary.   Let $G$ be the subgroup    of $\Gamma$   such that    the entries in a pair $(\sigma,\tau)\in \Gamma = \mfS_n\times \mfS_{n(n-1)/2}$ are induced by the same underlying permutation of the index set. (Note that $G\cong \mfS_n$, but we introduce the new notation so as to think of it as a subgroup of $\Gamma$.)

  We now use Theorem~\ref{thm:galois-for-ML}. Let $\mathcal{F}$ be the class of polynomial functions on $\mathcal{S}(n)$. This family forms a vector space that is stable under the action of $\Gamma$, and nonzero polynomials have closed and measure zero vanishing sets. We have $f^\star\in \mathcal{F}$; to verify the hypothesis of Theorem~\ref{thm:galois-for-ML} we need to also check that $\gamma f^\star = f^\star \Rightarrow \gamma\in G$ for all $\gamma\in \Gamma$.   For an element $(\sigma,\tau)$ of $\Gamma$ to leave $f^\star$ invariant, it must send terms of the form $X_{ii}X_{ij}$ to other such terms. So the pair $(\sigma,\tau)$ of permutations must preserve the relation of sharing an index between the $X_{ii}$'s and the $X_{ij}$'s ($i\neq j$). Now $\sigma$ (acting on the $X_{ii}$'s) tells us what the underlying permutation of the indices must be, and the fact that the pair $(\sigma,\tau)$ preserves this incidence relation tells us that $\tau$ must be induced by the same underlying permutation of the indices. (This point is argued   in more detail    in the   original    proof in the Appendix.) Thus the only elements of $\Gamma$ that leave $f^\star$ invariant are precisely those belonging to $G$, and  Theorem~\ref{thm:galois-for-ML} then lets us conclude that the $f^d$'s, $f^o$'s, and $f^\star$ are generically separating for $G$. 
\end{proof}

\section{Invariant functions on point clouds} \label{sec:inv-clouds}

In this section, we show that the invariant features given in the previous section are also able to generically separate point clouds up to orthogonal transformation and relabeling, when viewed as functions on the Gram matrix of the point cloud. As before, it follows that these invariant features are able to approximate arbitrary continuous functions on such point clouds. A little additional work is needed here because the previous section's result holds only generically, but the Gram matrix of a (generic) point cloud in fixed dimension $d$ is not a generic symmetric matrix.

A defect of the result given here is that whereas a symmetric matrix is specified by $O(n^2)$ numbers, a point cloud is specified by $O(n)$ numbers (assuming that the dimension $d$ of the ambient space is fixed). Thus the complexity of computing the proposed invariant features scales with the square of the input data (up to log factors), rather than linearly. We rectify this defect in Section~\ref{sec:reduction}.

We are interested in point cloud learning scenarios in which the ambient space of the point cloud does not carry any  natural orientation (e.g., galaxy point clouds in astronomy). Thus, in this work, a point cloud is a tuple $v_1,\dots, v_n$ of vectors in $\RR^d$, modulo equivalence under the action of the orthogonal group $\mathrm O(d)$ simultaneously on all the vectors, in addition to permutations of the vectors.\footnote{In astronomical applications, the origin is also arbitrary. One can factor out this translation invariance by replacing each $v_i$ with $v_i-\frac{1}{n}\sum_j v_j$.} If we amalgamate the vectors into a $d\times n$ matrix $V:=\begin{pmatrix}v_1&\dots&v_n\end{pmatrix}\in \RR^{d\times n}$, then the equivalence relation is given by
\[
V\sim UVP_\pi^\top,
\]
where $U\in \mathrm O(d)$ is an arbitrary $d\times d$ orthogonal matrix, and $P_\pi$ is an $n\times n$ permutation matrix corresponding to an arbitrary permutation $\pi\in \mfS_n$; thus the space of point clouds is the orbit space
\[
\mathrm O(d) \backslash \RR^{d\times n}/\mfS_n,
\]
with $\mathrm O(d)$ acting on the left and $\mfS_n$ acting on the right as above.

There is a natural map
\[
\gamma:\mathrm O(d)\backslash \RR^{d\times n}/\mfS_n\rightarrow \mathcal{S}(n)/\mfS_n,
\]
where the action of $\mfS_n$ on $\mathcal{S}(n)$ is given by \eqref{eq.symmetry}. This map is induced from the map
\[
\widehat \gamma: \RR^{d\times n}\rightarrow \mathcal{S}(n)
\]
given by
\[
V\mapsto V^\top V
\]
which sends a tuple of vectors to its Gram matrix. Note that $\widehat \gamma:\RR^{d\times n}\rightarrow\mathcal{S}(n)$ is $\mathrm O(d)$-invariant by construction, so it factors through the quotient space $\mathrm O(d) \backslash \RR^{d\times n}$; and it is also $\mfS_n$-equivariant by construction, thus it induces the claimed map $\gamma$ on the quotients by $\mfS_n$.

The image of $\widehat \gamma$ in $\mathcal{S}(n)$ is the set of positive semidefinite matrices of rank $\leq d$. The $f^d$'s, $f^o$'s, and $f^*$ of the previous section become functions on $\RR^{n\times d}$ via composition with $\widehat \gamma$; we indicate these composed maps by adding a tilde to the $f$'s, as in,
\begin{align*}
    \tilde f^*: \RR^{d\times n}&\rightarrow \RR\\
    V&\mapsto f^*(\widehat \gamma(V)) = f^*(V^\top V),
\end{align*}
and similarly for the $\tilde f^d$'s and $\tilde f^o$'s (which we refer to collectively as ``the $\tilde f$'s"). These functions are $\mathrm O(d)\times \mfS_n$-invariant per the discussion above. A priori, because of the inexplicitness of the ``bad set" $B$ of the previous section, there is a danger that the set of rank $\leq d$ positive semidefinite matrices lies within it, so we cannot apply Proposition~\ref{prop:universal-approx-for-matrices} or Theorem~\ref{thm:generic-separators} directly in this setting. However, with some additional care, we can prove the following. 

\begin{prop}\label{prop:low-rank-universal}
    There is a closed, $\mathrm O(d)\times \mfS_n$-invariant, measure zero set $\tilde B\subset \RR^{d\times n}$ (the ``bad set"), such that any continuous, $\mathrm O(d)\times \mfS_n$-invariant function $\RR^{d\times n}\setminus \tilde B\rightarrow \RR$ defined on the complement of $B$ can be uniformly approximated on any compact subset by a multi-layer perceptron that takes the $\tilde f$'s as inputs.
\end{prop}

\begin{proof}
    This follows from the following theorem just as Proposition~\ref{prop:universal-approx-for-matrices} followed from Theorem~\ref{thm:generic-separators}.
\end{proof}

\begin{theorem}\label{thm:low-rank-separating}
    There is a closed, $\mathrm O(d)\times \mfS_n$-invariant, measure zero set $\tilde B\subset\RR^{d\times n}$ such that any two distinct orbits of $\mathrm O(d)\times \mfS_n$ in the complement of $\tilde B$ are separated by some $\tilde f$.
\end{theorem}

The proof is in the Appendix. The argument is similar in spirit and general structure to the proof of Theorem~\ref{thm:generic-separators}, but is more involved. The main additional ideas are as follows:
\begin{enumerate} 
    \item The action by the bigger group $\Gamma$ of Theorem~\ref{thm:generic-separators} is on $\mathcal{S}(n)$, not on $\RR^{d\times n}$. To get around this, we work in an algebraic setting so we can invoke some standard facts about field automorphisms. Thus, in place of Theorem~\ref{thm:galois-for-ML} we use Galois theory to compute generators for the field of rational invariants of the action of $\mathrm{O}(d)\times \mfS_n$ on $\RR^{d\times n}$ described above---see Corollary~\ref{cor:pt-cloud-field-generators}---and we use Rosenlicht's theorem to arrive at generic separation.   
    
    \item The ring of polynomial functions on  point clouds modulo the action of $\mathrm O(d)$ is isomorphic to the quotient of the ring of polynomial functions on symmetric matrices, modulo the  ideal $I_{d+1}$ of $(d+1)\times (d+1)$ minors. The field of rational functions is the fraction field of this.
    \item As with the $f^d$'s and $f^o$'s of Theorem~\ref{thm:generic-separators}, the $\tilde f^d$'s and $\tilde f^o$'s identify the Gram matrix of the point cloud up to arbitrary independent permutations of the diagonal and off-diagonal elements. So the argument again comes down to checking that the only ones of these pairs of permutations that fix $\tilde f^*$ are those that are induced by the same underlying permutation of the original points. But this time, to check this, we need to make sure that all other pairs of permutations not only fail to fix $f^*$, but even fail to fix it modulo $I_{d+1}$. So we need to reason a little about the ideal $I_{d+1}$.
    \item For $d>1$, it is enough to observe that $I_{d+1}$ is generated by elements of degree $>2$. But for $d=1$ (admittedly a minor case anyway), we reason based on the actual form of the generators for $I_{d+1}$. The main idea for this case was suggested to us by Alexandra Pevzner.
\end{enumerate}

\section{  Reducing the number of invariants on point clouds}
\label{sec:reduction}
In this section, we show that for point clouds of $n$ points in dimension $d$ ($d\ll n$), we can reduce the number of invariant features and overall computational complexity from $O(n^2)$ to $O(n)$, by exploiting that the Gram matrix is low rank   (see Theorem \ref{thm:Ond-features}).    
To this end, we make use of two objects: \emph{$d$ equivariant identifiers} and \emph{permutation-orbit separators}, defined as follows.

\begin{description}
    \item[$\mathbf d$ equivariant identifiers:] We consider a function $C: \mathbb R^{d\times n} \to \mathbb R^{d\times d}$, that is defined and continuous everywhere, except possibly in an $\mathrm{O}(d)\times \mfS_n$-stable measure zero closed set. We further ask that for all input $V$ (except possibly in an additional  $\mathrm{O}(d)\times \mfS_n$-stable closed measure zero set), this function outputs a matrix $C_V$ whose columns are an ordered $d$-tuple of linearly independent points in $\mathbb R^n$. Finally, we ask that $C_V$ be an $\mathrm O(d)$-equivariant and $\mfS_n$-invariant function of $V$ where defined, i.e.,
    \[
    C_{UVP_\pi^\top} = UC_V,
    \]
    where $U\in \mathrm O(d)$ and $\pi \in \mfS_n$.
\end{description}

\begin{rem}\label{rem:d-equivariant-identifiers}   The $d$ equivariant identifiers is a generic concept that we use as a computational tool in order to reduce the number of invariant features from $O(n^2)$ to $O(n)$. Standard concepts such as $k$-means centers satisfy this definition even if the data is not clusterable, and can be used as a black-box subroutine. In particular,    Proposition \ref{prop.kmeans} below shows that we can take $C_V$ to be the matrix with columns the $k$-means centroids of the point cloud $V$ with $k=d$, ordered by the 2-norm. Using $k$-means clustering for quantizing point clouds is a standard practice in data science (e.g.\cite{blumberg2020mrec,beugnot2021improving}). In theory, computing the $k$-means centroids is an NP-hard problem \cite{awasthi2015hardness}, but several randomized algorithms (such as kmeans++ \cite{arthur2007k}) are very efficient and seem to perform well in practice. However, centroids obtained via randomized algorithms do not exactly fit the theoretical framework we consider here (since we cannot guarantee that the centroids will be a function of the input, let alone $\mathrm O(d)$-equivariant or $\mfS_n$-invariant). There exist deterministic approximation algorithms for the $k$-means centroids that either are or can be modified to be  $\mathrm O(d)$-equivariant and $\mfS_n$-invariant, for example the method defined in Theorem 14 of \cite{mixon2017clustering}. For simplicity, we prove Proposition \ref{prop.kmeans} for the $k$-means optimal centroids (which we may not be able to compute in general), and, also for simplicity, in the numerical experiments in Section \ref{sec:experiments} we use the standard (randomized) implementation of $k$-means. For the sake of theoretical completeness, we also exhibit an alternative construction for $d$ equivariant identifiers in Proposition~\ref{prop:alternative-C}, which are directly computable with $O(n)$  cost (for fixed $d$).
\end{rem}

\begin{description}

\item[Permutation-orbit separators:] We consider
a continuous function $D:\mathbb R^{d\times n} \to \mathbb R^{s}$, where $s = O(n)$ (for fixed $d$), satisfying: 
 \begin{enumerate}
     \item[(a)] $D$ is $\mfS_n$-invariant for the permutation action on columns, i.e.,
     \[
     D(VP_\pi^\top) = D(V)
     \] 
     for $\pi \in \mfS_n$; and
     \item[(b)] $D$ is  separating: for every $x\in \mathbb R^{d\times n}$,  if $y\in \mathbb R^{d\times n}$ satisfies $D(x)=D(y)$, then $x$ and $y$ are in the same $\mfS_n$-orbit.
 \end{enumerate}
 \end{description}
 \begin{rem}
A method for constructing permutation-orbit separators $D$ as defined above is given in \cite[Proposition~2.1]{dym2024low}, with $s=2nd+1$. In practice, we will implement learning models belonging to a function class known as DeepSets \cite{zaheer2017deep}, widely used in machine learning. We describe DeepSets in the next section, and use these models for the numerical experiments in Section \ref{sec:experiments}. Theorems 5 and 6 in \cite{tabaghi2023universal} imply\footnote{This implication is not drawn out explicitly in \cite{tabaghi2023universal}. Theorem 5 only states that there exist {\em generically} separating DeepSets. However, Theorem 6 cannot hold unless the DeepSets of Theorem 5 are in fact fully separating.} that there exist DeepSets with an encoder $\phi$ (see Section~\ref{sec:DeepSets} for notation) such that the encode-plus-aggregate step $\{x_i\}\mapsto \sum_i\phi(x_i)$ serves as a permutation-orbit separator with $s=2nd$. This result should be interpreted as saying that DeepSets with $O(n)$ latent dimension are expressive enough to serve the function of the permutation-orbit separator. The proof technique requires a full $2nd$ dimensions in the latent space---but as a matter of engineering, for learning a particular target function one may obtain better results using a smaller latent dimension $s$.
\end{rem}

\begin{rem}
    It is well-known in invariant theory, in a more general context, that there always exists a separating set of size approximately twice the dimension of the data space \cite[Proposition~5.1.1]{dufresne-thesis}, \cite[Theorem~6.1]{kemper2024invariant}.  However, the proofs involve computing a potentially much larger separating set and projecting it to low dimension, so these theorems do not provide separators computable with $O(n)$ space or time cost. Similarly, \cite[Theorem~12 and Section 3.2.1]{cahill2022group} provides a methodology for finding twice-dimension-many worst-case separators, but again they are not (at least evidently) computable in $O(n)$ time. The advantage of \cite[Theorems~5 and 6]{tabaghi2023universal} for our purposes is that they show that the DeepSets architecture, which is practical for other reasons---including that it allows to uniformly handle point clouds of different sizes---can serve the function of the permutation-orbit separators, with not worse than the target computational cost.
\end{rem}

Using these ingredients   we provide a construction of $O(n)$ generically separating invariants for $d$-dimensional point clouds with $n$ points.    We can prove the following statement. 

\begin{theorem}\label{thm:Ond-features}
Given $V \in \mathbb R^{d \times n}$, $C:\mathbb R^{d\times n} \to \mathbb R^{d\times d}$ and $D:\mathbb R^{d\times n} \to \mathbb R^s$ as above, then there exists a closed, $\mathrm O(d)\times \mfS_n$-stable, measure zero set $\widehat B\subset\RR^{d\times n}$ such that in the complement of $\widehat B$, the map
\[
h(V):= (D (C_V^\top V), \;  C_V^\top C_V) \in \mathbb R^s \times \mathcal{S}(d)
\]
is defined, continuous, $\mathrm O(d)\times \mfS_n$-invariant, and separates any two distinct orbits of $\mathrm O(d)\times \mfS_n$.
\end{theorem}

\begin{proof}
Given a point cloud $V\in \mathbb R^{d\times n}$, consider $\tilde V = 
\left[\begin{matrix}
V & C_V
\end{matrix}
\right]  \in \mathbb R^{d \times (n+d)}$, and $\tilde X = \tilde V^\top \tilde V \in \mathbb R^{(n+d) \times (n+d)}$.
The matrix $\tilde X$ is a rank-$d$ symmetric matrix.
Theorem A in \cite{hamm2020perspectives} implies that $\tilde X$ can be reconstructed from the last $d$ rows and $d$ columns of $\tilde X$ as long as $C_V$ is full rank. We are interested in constructing a continuous separator $h$ from $C_V$, so we also ask the latter to be a continuous function of $V$. Both these properties of $C_V$ are guaranteed for $V$ lying outside of some $\mathrm{O}(d)\times \mfS_n$-stable, closed, measure zero set $B'$, by the definition of $C$. 

We take $\widehat B$ to be the union of $B'$ and the $\tilde B$ of Theorem~\ref{thm:low-rank-separating}. It is clear that $h$ is defined and continuous on the complement of $\widehat B$ (and in fact even on the complement of $B'$). Similarly, $h$ is $\mathrm O(d)\times \mfS_n$-invariant on $\RR^{d\times n}\setminus \widehat B$, because given $(U,\pi)\in \mathrm O(d)\times \mfS_n$, we have
\begin{align*}
h(UVP_\pi^\top) &= (D(C_V^\top U^\top UVP_\pi^\top), C_V^\top U^\top UC_V) \\
&= (D(C_V^\top V), C_V^\top C_V)\\
&= h(V),
\end{align*}
by the orthogonality of $U$ and the properties of $C$ and $D$. We need to show $h$ is separating on $\RR^{d\times n}\setminus \widehat B$.

Suppose $V_1,V_2\in \RR^{d\times n}\setminus \widehat B$ lie in different orbits of $\mathrm{O}(d)\times \mfS_n$. Then they are distinguished by some $\tilde f$, by Theorem~\ref{thm:low-rank-separating} (where the $\tilde f$'s are as defined in Section~\ref{sec:inv-clouds}). Since the $\tilde f$'s can be computed from the Gram matrices $V_1^\top V_1$ and $V_2^\top V_2$, and are invariant with respect to the $\mfS_n$-conjugation action on these matrices, this implies that $V_1^\top V_1$ and $V_2^\top V_2$ lie in different orbits for the $\mfS_n$-conjugation action. 

In turn, this implies that, with $\tilde V_i = \left[\begin{matrix}
V_i & C_{V_i}
\end{matrix}
\right]$ and $\tilde X_i = \tilde V_i^\top \tilde V_i$ as above, where $i=1,2$, we must have $\tilde X_1$ and $\tilde X_2$ lying in different orbits for the conjugation action by $\mfS_n$ on the first $n$ rows and columns.

By the matrix reconstruction result \cite[Theorem~A]{hamm2020perspectives} quoted above, this then implies that $C_{V_1}^\top\tilde V_1$ and $C_{V_2}^\top\tilde V_2$ are different, since these matrices constitute the last $d$ rows of $\tilde X_1,\tilde X_2$ and so by symmetry, also the last $d$ columns. In fact, since the conjugation action of $\mfS_n$ on the first $n$ rows and columns of each $\tilde X_i$ has the effect of permuting the first $n$ columns of the final $d$ rows (without permuting these rows), it further follows that $C_{V_1}^\top\tilde V_1$ and $C_{V_2}^\top\tilde V_2$ lie in different orbits for the permutation action of $\mfS_n$ on the first $n$ columns. In more explicit terms, either $C_{V_1}^\top\tilde V_1$ and $C_{V_2}^\top\tilde V_2$ differ in their final $d$ columns---which are the columns of $C_{V_1}^\top C_{V_1}$ and $C_{V_2}^\top C_{V_2}$---or they differ even up to column permutation in their first $n$ columns---which are the columns of $C_{V_1}^\top V_1$ and $C_{V_2}^\top V_2$---or both.

Since $D$ is a permutation-orbit separator, if $C_{V_1}^\top V_1$ and $C_{V_2}^\top V_2$ differ even up to column permutation (i.e., lie in different orbits for the $\mfS_n$-action on columns), then this difference is detected by $D$. Thus, either
\[
C_{V_1}^\top C_{V_1}\neq C_{V_2}^\top C_{V_2},
\]
or
\[
D(C_{V_1}^\top V_1) \neq D(C_{V_2}^\top V_2),
\]
or both. In other words,
\[
h(V_1)\neq h(V_2). \qedhere
\]
\end{proof}

As in Sections~\ref{sec:invariants-on-S(n)} and \ref{sec:inv-clouds}, we immediately get a universal approximation guarantee on the complement of $\widehat B$, with the same proof:

\begin{prop}\label{prop:Ond-univ-approx}
    With $h, V$ and $\widehat B$ as in Theorem~\ref{thm:Ond-features}, any $\mathrm O(d)\times \mfS_n$-invariant function $\RR^{d\times n}\setminus \widehat B\rightarrow \RR$ can be uniformly approximated on any compact subset by a multi-layer perceptron that takes $h(V)$ as input. \qed
\end{prop}

\begin{rem}
    In Theorem~\ref{thm:Ond-features}, we formed the matrix $C_V^\top V$, and then processed its columns as a multiset (i.e., in a manner that is invariant to column permutations). This is related to a device used in \cite[Theorem~4]{hordan2023complete}, in which $d-1$ points from the original point cloud $V$ are selected arbitrarily, a $d\times d$ matrix $X$ is formed from them in an $\mathrm O(d)$-equivariant manner, and the columns of $X^\top V'$ are then processed as a multiset, where $V'$ is $V$ with the selected points omitted. To obtain permutation invariance, the method in \cite{hordan2023complete} requires to repeat this process over all $\binom{n}{d-1}$ choices of $d-1$ points; the present approach avoids this computational burden by using a single permutation-invariant $C$.
\end{rem}

We close the section with two propositions promised by Remark~\ref{rem:d-equivariant-identifiers}. In Proposition~\ref{prop.kmeans}, we verify the claims made in that remark about $k$-means clustering as a choice for the $d$ equivariant identifiers. We note that $k$-means clustering is just a standard method that can be used to produce $d$ equivariant identifiers regardless of whether the data is naturally clusterable. Then, in Proposition \ref{prop:alternative-C}, we provide an alternative implementation of the $d$ equivariant identifiers that does not rely on $k$-means clustering.

\begin{prop} \label{prop.kmeans}
    There is a closed, measure zero, $\mathrm O(d) \times \mfS_n$-stable set $B^!$ in $\RR^{d\times n}$ (a ``bad set"), such that we have the following on its complement:
    \begin{enumerate}
        \item For a point cloud $V\in \RR^{d\times n} \setminus B^!$, 
        the $k$-means clustering centroids (taking $k=d$) have distinct norms. Thus, there is a well-defined function
    \[
    C : \RR^{d\times n}\setminus B^! \rightarrow \RR^{d\times d}
    \]
    that, given a point cloud $V$, returns the matrix $C_V$ whose columns are the $k$-means clustering centroids $c_1, \ldots, c_d \in \mathbb R^d$, ordered by descending $2$-norm: $\|c_1\|>\|c_2\|>\ldots >\|c_d\|$.\label{item:def-of-C}
    \item The function $C$ defined in \ref{item:def-of-C} is $\mathrm O(d)$-equivariant, $\mfS_n$-invariant, and continuous (on $\RR^{d\times n}\setminus B^!$).
    \end{enumerate} 
\end{prop}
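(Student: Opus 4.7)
The plan is to construct $B^!$ from two algebraically characterized bad events: non-uniqueness of the optimal $d$-means partition, and coincidence of two centroid norms at a unique optimum. For each partition $\Pi = (S_1, \dots, S_d)$ of $[n]$ into $d$ non-empty blocks, the $d$-means objective evaluated at the optimal (mean) centroids is the polynomial
$$F_\Pi(V) = \sum_{j=1}^d \frac{1}{|S_j|} \sum_{\substack{i < i' \\ i, i' \in S_j}} \|v_i - v_{i'}\|^2,$$
and for distinct $\Pi \neq \Pi'$ these polynomials are distinct: the coefficient of $\|v_i - v_{i'}\|^2$ is positive iff $i, i'$ are co-clustered in $\Pi$, and the co-clustering relation determines the partition. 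I would then define
$$B_1 := \bigcup_{\Pi \neq \Pi'}\{V : F_\Pi(V) = F_{\Pi'}(V) = \min\nolimits_{\Pi''} F_{\Pi''}(V)\}$$
and
$$B_2 := \bigcup_{\Pi,\,i \neq j}\{V : F_\Pi(V) = \min\nolimits_{\Pi''} F_{\Pi''}(V)\text{ and }\|\bar v_{S_i}\| = \|\bar v_{S_j}\|\},$$
where $\bar v_{S_j}$ is the mean of the columns of $V$ indexed by $S_j$, and set $B^! := B_1 \cup B_2$.

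Next I would verify the required properties of $B^!$. Closedness: each contributing set is an intersection of closed sets (polynomial equations and ``min is achieved'' inequalities $F_\Pi \le F_{\Pi''}$), and $B^!$ is a finite union of these. Measure zero: each contributing piece lies in a proper algebraic subvariety of $\RR^{d\times n}$---for $B_1$ this is $\{F_\Pi = F_{\Pi'}\}$, using the polynomial distinctness noted above; for $B_2$ this is $\{\|\bar v_{S_i}\|^2 = \|\bar v_{S_j}\|^2\}$, which is a proper variety since this quadratic is not identically zero (any $V$ with $\bar v_{S_i}\neq \pm\bar v_{S_j}$ witnesses non-vanishing). $\mathrm O(d)\times \mfS_n$-stability: orthogonal transformations preserve each $F_\Pi$ (distances) and each centroid norm, while permutations of the columns of $V$ merely relabel blocks of $\Pi$, preserving both conditions.

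Finally I would verify the three properties of $C$ on $\RR^{d\times n}\setminus B^!$. $\mfS_n$-invariance: $F_\Pi$ depends on $V$ only through the multiset of columns, so the unordered set of optimal centroids is unchanged under column permutations, hence so is $C_V$ after sorting by $2$-norm. $\mathrm O(d)$-equivariance: when $V$ is replaced by $UV$ with $U\in\mathrm O(d)$, the optimal partition is unchanged and centroids transform as $c_j \mapsto Uc_j$, preserving their norms and hence the sort order, giving $C_{UV} = UC_V$. Continuity: fix $V_0 \notin B^!$ with unique optimal $\Pi^\star$; since $F_{\Pi^\star}(V_0) < F_\Pi(V_0)$ strictly for $\Pi \neq \Pi^\star$ and the $F_\Pi$ are continuous, $\Pi^\star$ is uniquely optimal on a neighborhood of $V_0$, on which $c_j(V) = \bar v_{S^\star_j}(V)$ depends linearly on $V$; the distinct-norm condition at $V_0$ extends by continuity to a smaller neighborhood, preserving the sort order. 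The main technical obstacle is the polynomial distinctness $F_\Pi \neq F_{\Pi'}$ for $\Pi \neq \Pi'$, since it drives both the measure-zero verification and (via the strict inequalities between $F$'s it produces) the continuity argument; the co-clustering observation above is the key idea that makes it tractable.
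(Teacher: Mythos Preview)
Your approach is essentially the same as the paper's: both construct $B^!$ as the union of (i) a set where two partitions give equal $k$-means objective and (ii) a set where some partition has two equal-norm centroids, then deduce continuity from local constancy of the optimal partition. Your bad set is more economical---you intersect each piece with the condition ``$\Pi$ achieves the minimum,'' whereas the paper (for simplicity, as it notes) takes the full union over all partitions regardless of optimality; and you explicitly verify $F_\Pi \neq F_{\Pi'}$ via the co-clustering observation, which the paper asserts without argument.

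One small slip: the parenthetical ``any $V$ with $\bar v_{S_i}\neq \pm\bar v_{S_j}$ witnesses non-vanishing'' is false for $d>1$ (e.g.\ orthogonal unit vectors have equal norm). You only need \emph{some} $V$ with $\|\bar v_{S_i}\| \neq \|\bar v_{S_j}\|$, which is immediate since $S_i$ and $S_j$ are disjoint (set the $S_i$-columns to a fixed nonzero vector and the $S_j$-columns to zero). This does not affect the argument.
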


\begin{proof}
Let $V= \left(\begin{matrix}
| & & | \\
p_1&\cdots &p_n\\
| & & |
\end{matrix}
\right) \in \mathbb R^{d\times n}$. Let $\mathcal P_k(n)$ be the (finite) set of partitions of the $n$ columns of $V$ into $k$ disjoint subsets. 
The $k$-means problem looks for the partition of the points into clusters that minimizes the sum of the squared distances from each point to the centroid of its cluster, 
\begin{align}
    \operatorname{kmeans-val}(V) = \min_{\mathcal A: = \{A_1,\ldots,A_k\} \in \mathcal P_k(n)} \operatorname{kmeans-obj}(\mathcal A, V)\\ 
    \operatorname{kmeans-obj}(\mathcal A, V) = \sum_{s=1}^k \sum_{i\in A_s} \|p_i - c_s(\mathcal A, V)\|^2 \quad \text{where } c_s(\mathcal A, V) = \frac{1}{|A_s|}\sum_{i\in A_s} p_i.
\end{align}
Note that each $c_s(\mathcal A,V)$ is $\mathrm O(d)$-eqivariant by construction. The $\operatorname{kmeans-obj}
$ function can be  can be written as 
\begin{align} \label{eq.kmeans}
    \operatorname{kmeans-obj}(\mathcal A, V)  =\frac12\sum_{s=1}^k \frac{1}{|A_s|} \sum_{i\in A_s} \sum_{j \in A_s} \|p_i - p_j\|^2;
\end{align}
see for instance \cite{MIXON}. Note that this is an $\mathrm O(d)$-invariant polynomial function of $V$ for fixed $\mathcal A$.

We get the desired result by arguing (i) that for a fixed $\mathcal A_0\in \mathcal P_k(n)$, the centroids of the sets of columns of $V$ dictated by the parts of $\mathcal A_0$ have distinct 2-norms away from a $\mathrm O(d)\times \mfS_n$-stable measure-zero closed set $B^\dagger$, thus they can be unambiguously ordered by $2$-norm away from $B^\dagger$; (ii) for each $\mathcal A_0$ the map from $V$ to the matrix whose columns are these centroids ordered by $2$-norm 
($V\mapsto \arg\operatorname{sort}(\{\|c_s(\mathcal A_0, V)\|\}_{s=1}^d)$) 
is continuous away from this same $B^\dagger$ (note that, for fixed $V$, the $C$ of the theorem statement coincides with $\arg\operatorname{sort}(\{\|c_s(\mathcal A_0, V)\|\}_{s=1}^d)$ for $\mathcal A':=\arg\min_{\mathcal A}\operatorname{kmeans-obj}(\mathcal{A},V)$); and (iii) that there is another $\mathrm O(d)\times \mfS_n$-stable measure-zero closed set $B^\bullet$ such that for $V_0\notin B^\bullet$, the $\mathcal A'$ that minimizes $\operatorname{kmeans-obj}(\mathcal A,V)$ is constant for $V$ in a neighborhood of $V_0$, thus the continuity of the map in (ii) away from $B^\dagger$ implies the continuity of $C$ away from $B^\dagger\cup B^\bullet$. We take $B^!$ to be the union of $B^\dagger$ and $B^\bullet$. It is clear from the construction that $C$ is $\mathrm O(d)$-equivariant and $\mfS_n$-invariant where defined, so (i), (ii), and (iii) will complete the proof.

We first observe that for each $\mathcal A_0 = \{A_1,\ldots, A_k\} \in \mathcal P_k(n) $, the map  $ V \mapsto \arg\operatorname{sort}(\{\|c_s(\mathcal A_0, V)\|\}_{s=1}^d)$ (the matrix whose columns are the the vectors $\{c_s(\mathcal A_0, V)\}_{s=1}^d$ ordered by 2-norm) is continuous everywhere except for a zero-measure closed stable subset of $\mathbb R^{d\times n}$. 
This is because each of the $c_s(\mathcal A_0, V)$ ($s=1, \ldots, d$) are linear functions of $V$; thus the only discontinuity points may occur when the order by $2$-norm changes. In turn, this can only happen when $\|c_\ell (\mathcal A_0, V)\|=\|c_k(\mathcal A_0, V)\|$ for some $\ell\neq k$.
We observe that the set $\{\|c_\ell(\mathcal A_0, V)\|=\|c_k(\mathcal A_0, V)\|:V\in\mathbb R^{d\times n}\}$, or equivalently $\{\|c_\ell(\mathcal A_0, V)\|^2=\|c_k(\mathcal A_0, V)\|^2:V\in\mathbb R^{d\times n}\}$, is the vanishing set of a nonzero polynomial function $\xi_{\mathcal A_0, \ell,k}(V) := \|c_\ell(\mathcal A_0, V)\|^2-\|c_k(\mathcal A_0, V)\|^2$, and is therefore Zariski-closed. It is also $\mathrm O(d)$-stable because $c_\ell,c_k$ are $\mathrm O(d)$-equivariant. Therefore, we can take 
\[
B^\dagger = \bigcup_{\substack{\mathcal A_0\in\mathcal P_k(n) \\ \ell<k}} \xi_{\mathcal A_0, \ell, k}^{-1}(\{0\}).
\]
Note that this is a finite union. The union over $\mathcal A_0$ (which is not needed in practice but it simplifies the proof) makes this set $\mfS_n$-stable. This proves (i) and (ii).

What remains to show is that the assignment $V\mapsto \arg\min_{\mathcal A \in \mathcal P_k(n)} \operatorname{kmeans-obj}(\mathcal A, V)$ is constant everywhere except a closed $\mathrm O(d) \times \mfS_n$-stable zero-measure set. 
We first  observe that, by continuity of $\operatorname{kmeans-obj}$ in $V$, if $\operatorname{kmeans-obj}(\mathcal A_1, V_0) < \operatorname{kmeans-obj}(\mathcal A_2, V_0)$ then one can take a small open ball about $V_0$ such that for all $V$ in that open ball we have $\operatorname{kmeans-obj}(\mathcal A_1, V) < \operatorname{kmeans-obj}(\mathcal A_2, V)$.
This shows that the optimal assignment is locally constant in an open set, and the failure to be locally constant can only occur at points where 
\[
\operatorname{kmeans-obj}(\mathcal A_1, V) = \operatorname{kmeans-obj}(\mathcal A_2, V)
\]
for some $\mathcal A_1\neq \mathcal A_2$. Because $\operatorname{kmeans-obj}$ is a polynomial in $V$ for fixed $\mathcal A$, a similar argument as the one above shows that for $\mathcal A_1\neq \mathcal A_2$, 
\[
\{V\in\mathbb R^{d\times n}: \operatorname{kmeans-obj}(\mathcal A_1, V) = \operatorname{kmeans-obj}(\mathcal A_2, V)\}
\]
is a (Zariski-closed, and therefore) measure-zero set. From the $\mathrm O(d)$-invariance of \eqref{eq.kmeans}, it is also $\mathrm O(d)$-stable. Therefore, we can take 
\[
B^\bullet = \bigcup_{\{\mathcal A_1,\mathcal A_2\} \in \binom{\mathcal P_k(n)}{2}}\{V\in\mathbb R^{d\times n}: \operatorname{kmeans-obj}(\mathcal A_1, V) = \operatorname{kmeans-obj}(\mathcal A_2, V)\},
\]
where $\binom{\mathcal P_k(n)}{2}$ denotes the set of 2-subsets of $\mathcal P_k(n)$; the union makes it $\mfS_n$-stable. This proves (iii).
\end{proof}

\begin{prop}\label{prop:alternative-C}
    Let $V\in \RR^{d\times n}$ and let  $v_i$ be the $i$th column of $V\in \RR^{d\times n}$. Then one obtains $d$ equivariant identifiers (as defined above) with the map
    \[
    C:\RR^{d\times n}\rightarrow \RR^{d\times d}
    \]
    obtained by sending $V$ to the matrix $C_V\in \RR^{d\times d}$ whose $j$th column ($j=1,\dots,d$) is
    \[
    \frac{1}{n}\sum_{i=1}^{n} \|v_i\|^{2(j-1)}v_i.
    \]
\end{prop}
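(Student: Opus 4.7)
We need to verify the three properties in the definition of $d$ equivariant identifiers: (a) that $C$ is defined and continuous outside some $\mathrm O(d)\times\mfS_n$-stable closed measure zero set; (b) that the columns of $C_V$ are linearly independent outside another such set; and (c) that $C$ is $\mathrm O(d)$-equivariant and $\mfS_n$-invariant where defined. Properties (a) and (c) are essentially immediate, so the real work is in (b).

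For (a), observe that the $j$th column of $C_V$ is a polynomial function of the entries of $V$ (since $\|v_i\|^{2(j-1)}$ expands as a polynomial in the coordinates of $v_i$). Hence $C$ is defined and continuous on all of $\RR^{d\times n}$. For (c), the sum over $i$ in the formula for each column is symmetric in the index $i$, giving $\mfS_n$-invariance; and if $U\in \mathrm O(d)$ then $\|Uv_i\|=\|v_i\|$, so replacing $V$ by $UV$ scales each summand $\|v_i\|^{2(j-1)}v_i$ by left multiplication by $U$, yielding $C_{UV}=UC_V$. Combining, $C_{UVP_\pi^\top}=UC_V$ as required.

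The main step is (b). The key observation is that we can factor
\[
C_V \;=\; \tfrac{1}{n}\, V\, W(V),
\]
where $W(V)\in\RR^{n\times d}$ is the Vandermonde-like matrix with entries $W(V)_{ij}=\|v_i\|^{2(j-1)}$. Linear independence of the columns of $C_V$ is equivalent to $\det C_V\neq 0$, and $\det C_V$ is a polynomial in the entries of $V$. Thus its vanishing locus is Zariski-closed, and hence closed and of Lebesgue measure zero, provided we can exhibit a single $V$ with $\det C_V\neq 0$. I would do this by choosing $v_i=\alpha_i e_i$ for $i=1,\dots,d$ with distinct nonzero scalars $\alpha_1,\dots,\alpha_d$, and $v_i=0$ for $i>d$ (assuming $n\geq d$; if $n<d$ the statement is vacuous since $C_V$ cannot have $d$ linearly independent columns, but in the intended regime $n\geq d$). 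For this choice, the $(i,j)$ entry of $nC_V$ is $\alpha_i^{2j-1}$, so
\[
\det(nC_V) \;=\; \Big(\prod_{i=1}^{d}\alpha_i\Big)\prod_{1\leq i<k\leq d}(\alpha_k^2-\alpha_i^2),
\]
by pulling a factor of $\alpha_i$ out of the $i$th row and recognizing a Vandermonde determinant in the variables $\alpha_i^2$. This is nonzero when the $\alpha_i$ are distinct and nonzero. The hard part of the argument is spotting this Vandermonde structure; once seen, the nonvanishing is just the classical Vandermonde identity.

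Finally, the vanishing locus $\{V:\det C_V=0\}$ is $\mathrm O(d)\times\mfS_n$-stable, since by the equivariance proved in (c), $\det C_{UVP_\pi^\top}=\det(UC_V)=\det(U)\det(C_V)$, which vanishes iff $\det C_V$ does. Taking this locus as the second exceptional set in the definition completes the verification that $C$ satisfies all the conditions of a $d$ equivariant identifier.
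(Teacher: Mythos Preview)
Your proof is correct and follows essentially the same route as the paper's: verify polynomiality (hence continuity), check $\mathrm O(d)$-equivariance and $\mfS_n$-invariance directly, and show $\det C_V$ is not identically zero by exhibiting an explicit $V$ supported on scaled standard basis vectors, reducing to a Vandermonde determinant in the squared norms. One small slip: the determinant $\prod_i\alpha_i\cdot\prod_{i<k}(\alpha_k^2-\alpha_i^2)$ vanishes when two $\alpha_i$ have the same \emph{square}, so ``distinct nonzero scalars'' is not quite enough (e.g.\ $\alpha_1=1,\alpha_2=-1$ fails); you want the $|\alpha_i|$ pairwise distinct, which the paper arranges by taking $\alpha_i=i$.
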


\begin{proof}
    The map $C$ defined in the proposition statement is a polynomial map, so it is defined and continuous everywhere. By construction it is $\mfS_n$-invariant and $\mathrm O(d)$-equivariant. We only need to check that the columns are linearly independent away from a closed, measure zero, $\mathrm O(d)\times \mfS_n$-stable bad set $\breve B\subset \RR^{d\times n}$. 
    
    We take $\breve B$ to be the vanishing set of the polynomial function $V\mapsto \det C_V$. Since $\det:\RR^{d\times d}\rightarrow \RR$ is $\mathrm O(d)$-equivariant with respect to the canonical action on $\RR^{d\times d}$ and the sign action on $\RR$, the composition $\det \circ \, C: \RR^{d\times n}\rightarrow \RR$ is $\mathrm O(d)$-equivariant and $\mfS_n$-invariant, thus $\breve B$ is $\mathrm O(d)\times \mfS_n$-stable. Because it is also Zariski closed, it is enough to show that it is proper, i.e., to exhibit a single $V$ for which $\det C_V \neq 0$. We take $V$ such that
    \[
    v_i := ie_i
    \]
    for $i=1,\dots, d$, and $0$ otherwise. Then
    \[
    C_V = \frac{1}{n}\begin{pmatrix}
        1 & 1 & \dots & 1\\
        2 & 8 & \dots & 2^{2d-1}\\
        \vdots & \vdots & \ddots & \vdots \\
        d & d^3 & \dots & d^{2d-1}
    \end{pmatrix} = \frac{1}{n}\begin{pmatrix}
        1 & & & \\
         & 2 & & \\
         & & \ddots & \\
         & & & d
    \end{pmatrix}
    \begin{pmatrix}
        1 & 1 & \dots & 1\\
        1 & 4 & \dots & 2^{2(d-1)}\\
        \vdots & \vdots & \ddots & \vdots\\
        1 & d^2 & \dots & d^{2(d-1)}
    \end{pmatrix}.
    \]
    This is a diagonal matrix times a Vandermonde matrix; nonsingularity follows from the fact that the numbers $1,\dots, d$ are all different and nonzero.
\end{proof}

\begin{rem}
    Although Theorem~\ref{thm:Ond-features} is formulated in a manner oriented to its purpose in machine learning, we draw out the consequence in the traditional setting of invariant theory, and ask a natural followup question. Because the $C$ of Proposition~\ref{prop:alternative-C} is a polynomial map, and polynomial permutation-orbit separators $D$ of size $O(n)$ do exist as well (for example, the construction in the proof of \cite[Theorem~5]{tabaghi2023universal}, with appropriate choice of $\ell$), Theorem~\ref{thm:Ond-features} gives a recipe for constructing a family of $O(n)$ {\em polynomials} that generically separates the orbits of $\mathrm{O}(d)\times \mfS_n$ on $\RR^{d\times n}$. In the setting of an algebraic group acting on a vector space (or more generally a variety) over an algebraically closed field, generic orbit separation would imply generation of the invariant field, by Rosenlicht's theorem~\cite[Theorem~2]{rosenlicht1956some}. Because $\RR$ is not algebraically closed, Theorem~\ref{thm:Ond-features} does not immediately imply a field generation result. But it is natural to ask:
\end{rem}

\begin{RQ}
    For appropriate choice of polynomial permutation-orbit separators $D$, and polynomial $d$ equivariant identifiers $C$, do the entries of $h(V)$ generate the field of rational invariants of $\mathrm{O}(d)\times \mfS_n$'s action on $\RR^{d\times n}$?
\end{RQ}

\section{Extension of DeepSets with invariant features} \label{sec:DeepSets}

The goal of equivariant ML is to learn functions on data satisfying desired symmetries. One promising approach is to design machine learning models taking invariant features as inputs (\cite{blum2022machine}). In the previous sections, we defined sets of (almost everywhere) separating invariants for point clouds and symmetric matrices. It remains to show how to incorporate these separating invariants into ML models. To this end, we will use these invariants in combination with a very popular $\mfS_n$-invariant machine learning architecture known as DeepSets \cite{zaheer2017deep}. The principal virtue of DeepSets for our purposes is that it allows to handle data (either symmetric matrices or point clouds) of varying sizes in a uniform way.

Given a subset $\mathcal X \subset \mathbb R^d$, DeepSets are parametric functions that take a finite multiset of elements in $\mathcal X$ as input and output a vector in $\mathbb R^k$ 
\[
\texttt{DeepSet}(\{x_i\in \mathcal X: i\in I\}) := \sigma\left(\sum_{i\in I} \phi(x_i)\right),
\]
where $\sigma:\mathbb R^s\to \mathbb R^k$ and $\phi: \mathbb R^d \to \mathbb R^s$ are typically (trained) standard machine learning models such as multi-layer perceptrons (denoted as $\texttt{MLP}$ below), and $I$ is an index set for the (finite) input multiset. The function $\phi$ is often called the {\em encoder}, $\sigma$ is the {\em decoder}, and $\RR^s$ is the {\em latent space}. This machine learning model was introduced in \cite{zaheer2017deep} and widely applied. Its mathematical properties, such as universality, have recently been studied \cite{wagstaff2022universal, tabaghi2023universal, amir2024neural}. 

In particular, it follows from \cite{tabaghi2023universal} that there exists a family of specific ``universal" encoders $\phi:\RR^d\rightarrow \RR^s$ with $s=2nd$ (where $n$ is an upper bound on the number of elements in the input set, and $d$ is the dimension of each of those elements) such that in the resulting DeepSets, the map $\RR^{d\times n}\rightarrow \RR^s$ to the latent space given by $V\mapsto \sum_{v\in\operatorname{cols}(V)} \phi(v)$ is a permutation-orbit separator in the sense of Section~\ref{sec:reduction}. It follows that these DeepSets are universal approximators of $\mfS_n$-invariant continuous functions in compact sets of $\mathbb R^{d\times n}$. This is interpreted as implying that DeepSets with at worst $O(n)$ latent dimension are universally expressive of $\mfS_n$-invariant functions. That said, from a practical standpoint, when learning a single function with low-dimensional output, it is overly constraining to commit at the outset to one of the ``universal" encoders $\phi$ with its latent space dimension of $2nd$. Instead, the models used in practice learn both $\sigma$ and $\phi$ from data, and may set $s\ll 2nd$, especially when the training data set is small.

We propose to use DeepSets in combination with the results of Sections~\ref{sec:invariants-on-S(n)}, \ref{sec:inv-clouds}, and \ref{sec:reduction}. For symmetric matrix data, rather than computing the $f$'s from the input $X$ by sorting, we use the sets $\{X_{ii}\}$ and $\{X_{ij}\}$ as inputs to a DeepSet model, dubbed \emph{DeepSet for Conjugation Invariance} (DS-CI):
    \begin{equation}
       \text{DS-CI}(X) = \texttt{MLP}_c\left(  \texttt{DeepSet}_1 ( \{ f^d_k(X) \}_{k=1,\ldots, n} ),  \texttt{DeepSet}_2( \{f^o_\ell (X)\}_{\ell=1,\ldots, n(n-1)/2} ), \texttt{MLP}_3 (f^\star(X))\right) . \label{eqn:model}
    \end{equation}
Since the sets we consider for this application are 1-dimensional,  other machine learning techniques are also available (for instance, turning sets of scalars into histograms by binning and then using models defined on 1-dimensional probability distributions such as histogram regressions \cite{irpino2015linear, dias2015linear}). 

In the application to point clouds, we 
implement the permutation-orbit separators $D$ as DeepSets. In particular, we propose \emph{$\mathrm O(d)$-Invariant DeepSet} (OI-DS) as follows:
\begin{equation}
       \text{OI-DS}(V) = \texttt{MLP}_o\left( \texttt{DeepSet}( \{C_V^\top \,v : \, v\in \text{cols}(V)\} ), \;  \texttt{MLP}_{4}(C_V^\top C_V)\right), \label{eqn:model.pointcloud}
    \end{equation}
where $C_V$ is a $d$-tuple of equivariant identifiers as in Section \ref{sec:reduction}. In practice, we use the matrix of $k$-means centroids (with $k=d$), ordered by norm, defined in Section \ref{sec:reduction}. Here the DeepSet model is evaluated on sets of $n$ vectors of dimension $d$ (the columns of the matrix $C_V$).

In the application to point clouds, we typically have fixed $d$ (for example, $d=3$ is common) and very large $n$. In this regime, the algebraic variety that parametrizes point clouds modulo the $\mathrm O(d) \times \mfS_n$ action is of very low dimension compared to the variety of (all) symmetric matrices $\mathcal{S}(n)$, specifically $dn - \binom{d}{2}$ vs. $\binom{n+1}{2}$. This is the underlying reason why it was possible to reduce the number of invariant features from $O(n^2)$ in \eqref{eqn:model},  to $O(n)$ (for fixed $d$) in  
\eqref{eqn:model.pointcloud}. In practice, such efficiency gain is significant as there are point cloud applications (for example in cosmology) that involve large datasets of low-dimensional point clouds.

\section{Experimental results}
\label{sec:experiments}
\subsection{Molecular property prediction on QM7b} \label{sec:QM7b}

In the first experiment, we consider predicting molecule properties given the molecular structure. The properties are invariant with respect to rotations and reflections of the molecular graph, and permuting the nucleus labels. To this end, we make use of QM7b, a standard benchmark dataset for molecule property regression \cite{blum, Montavon2013}. It consists of 7,211 molecules with 14 regression targets. For each molecule, the input feature $X$ is an $n \times n$ symmetric Coulomb Matrix (CM) \cite{rupp2012fast}---a chemical descriptor of molecule. Specifically, $X$ is a function of the 3D coordinates $\{ R_i \}_{i=1}^n, R_i \in \RR^3$ of the nuclei, and the nuclear charges $\{ Z_i \}_{i=1}^n, Z_i \in \RR$, where
\begin{equation}
    X_{ij} = \begin{cases}
    0.5 \, Z_i^{2.4}, \quad i = j \\
    \frac{Z_i Z_j}{ \| R_i - R_j \|}, \quad i\neq j. 
    \end{cases}
\end{equation}

We apply DeepSet for Conjugation Invariance (DS-CI), defined in \eqref{eqn:model}, and an extension dubbed  DS-CI+. This extension uses a ``binary expansion" \cite[Appendix B]{montavon2013machine} of the invariant features $f$, which is a common preprocessing step in molecular regression. Concretely, the binary expansion $\phi: \mathbb R \to [0,1]^{d}$ lifts a scalar to a higher-dimensional space via
 \begin{equation}
     \phi(x) = \left[\ldots, \texttt{sigmoid}\left(\frac{x-\theta}{\theta}\right), \, \texttt{sigmoid}\left(\frac{x}{\theta}\right), \, \texttt{sigmoid}\left(\frac{x+\theta}{\theta}\right),\ldots\right],
 \end{equation}
where $\texttt{sigmoid}(x) = e^x / (1+e^x)$. We choose $\theta = 1$ and $d = 100$ in DS-CI+. We use an $80/10/10$ train-validation-test split on QM7b and repeat the experiment over $10$ random data splits. We use Mean Absolute Error (MAE) as the loss function and $\texttt{Adam}$ optimizer with initial learning rate $0.01$. We train the models for at most $1000$ epochs and report the test accuracy at the model checkpoint with the best validation accuracy. Table \ref{tab:exp-QM7b} shows that our lightweight models based on the proposed invariant features achieve competitive performance.

\begin{table*}[htb]
\scriptsize
\caption{We report the mean absolute error (MAE) on the test set over 10 random data splits (80/10/10 for train/validation/test sets). The results for the Kernel Ridge Regression (KRR) and Deep Tensor Neural Network (DTNN) with the same data split ratio are taken from \cite[Table 10]{wu2018moleculenet}. Bold numbers indicate the best results.}
\label{tab:exp-QM7b}
\centering
\begin{tabular}{ccccccccc}
\toprule
\textbf{MAE} $\downarrow$	& 
\begin{tabular}{@{}c@{}}Atomization  \\ PBE0 \end{tabular} &
\begin{tabular}{@{}c@{}} Excitation \\ ZINDO \end{tabular} & 
\begin{tabular}{@{}c@{}} Absorption  \\ ZINDO \end{tabular} &
\begin{tabular}{@{}c@{}} HOMO \\ ZINDO \end{tabular} &
\begin{tabular}{@{}c@{}} LUMO \\ ZINDO \end{tabular} &
\begin{tabular}{@{}c@{}} 1st excitation \\ ZINDO \end{tabular} & \begin{tabular}{@{}c@{}} Ionization \\ ZINDO \end{tabular} 
\\	
\midrule							
 KRR \cite{wu2018moleculenet}	& 9.3	&
1.83	&
0.098	&
0.369	&
0.361	&
0.479	&
0.408	\\
 DS-CI (Ours)	& 12.849$\pm$0.757 &	1.776$\pm$0.069 &	0.086$\pm$0.003 &	0.401$\pm$0.017 &	0.338$\pm$0.048 &	0.492$\pm$0.058 &	0.422$\pm$0.012  \\
 DTNN \cite{wu2018moleculenet} 
& 21.5 &
1.26 &
0.074 &
0.192 &
0.159 &
0.296 & 
0.214 
\\
 DS-CI+ (Ours) &\textbf{ 7.650$\pm$0.399} & \textbf{1.045$\pm$0.030} & \textbf{0.069$\pm$0.005} & \textbf{0.172$\pm$0.009} &  \textbf{0.119$\pm$0.005} & \textbf{0.160$\pm$0.011} & \textbf{0.189$\pm$0.011} \\
\midrule
\textbf{MAE} $\downarrow$	&
\begin{tabular}{@{}c@{}}  Affinity \\ ZINDO \end{tabular} &
\begin{tabular}{@{}c@{}} HOMO \\ KS \end{tabular}
&
\begin{tabular}{@{}c@{}} LUMO \\ KS \end{tabular}
&
\begin{tabular}{@{}c@{}} HOMO \\ GW \end{tabular}
&
\begin{tabular}{@{}c@{}} LUMO \\ GW \end{tabular}
&
\begin{tabular}{@{}c@{}} Polarizability \\ PBE0 \end{tabular}
  &
  \begin{tabular}{@{}c@{}} Polarizability \\ SCS \end{tabular}  \\
\midrule
KRR \cite{wu2018moleculenet} &0.404	&
0.272	&
0.239	&
0.294	&
0.236	&
0.225	&
0.116	\\

 DS-CI (Ours) & 0.404$\pm$0.047 &	0.302$\pm$0.009 &	0.225$\pm$0.01 &	0.329$\pm$0.016 &	0.213$\pm$0.008 &	0.255$\pm$0.015 &	0.114$\pm$0.008 \\
 DTNN \cite{wu2018moleculenet}
& 
0.174 &
\textbf{0.155} &
\textbf{0.129} &
\textbf{0.166} &
\textbf{0.139} &
0.173 &
0.149 \\
 DS-CI+ (Ours) & \textbf{0.122$\pm$0.002} & 0.169$\pm$0.007 & 0.135$\pm$0.007 & 0.183$\pm$0.005 & \textbf{0.139$\pm$0.004} & \textbf{0.139$\pm$0.005} & \textbf{0.088$\pm$0.004} \\
 \bottomrule
 \end{tabular}
 \end{table*}

 \subsection{Point cloud distance prediction on ModelNet10}

 \begin{figure}[htb!]%
    \centering
    \subfloat[\centering Class 2 (chair)]{{\includegraphics[width=6cm]{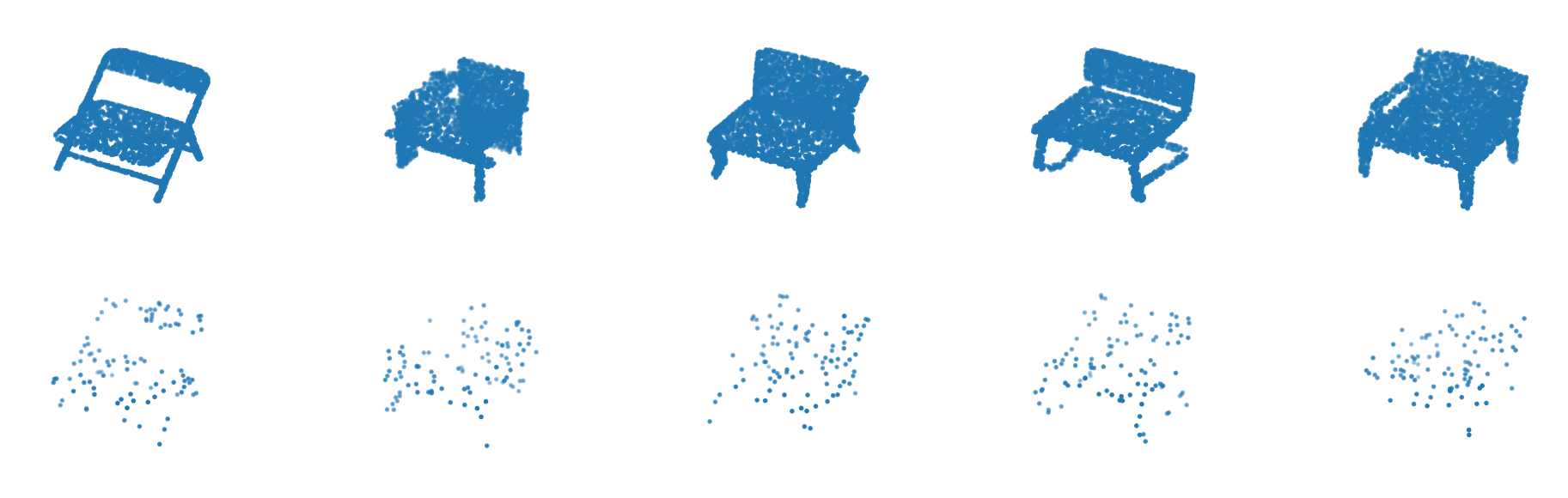} }}%
    \qquad
    \subfloat[\centering Class 7 (sofa)]{{\includegraphics[width=6cm]{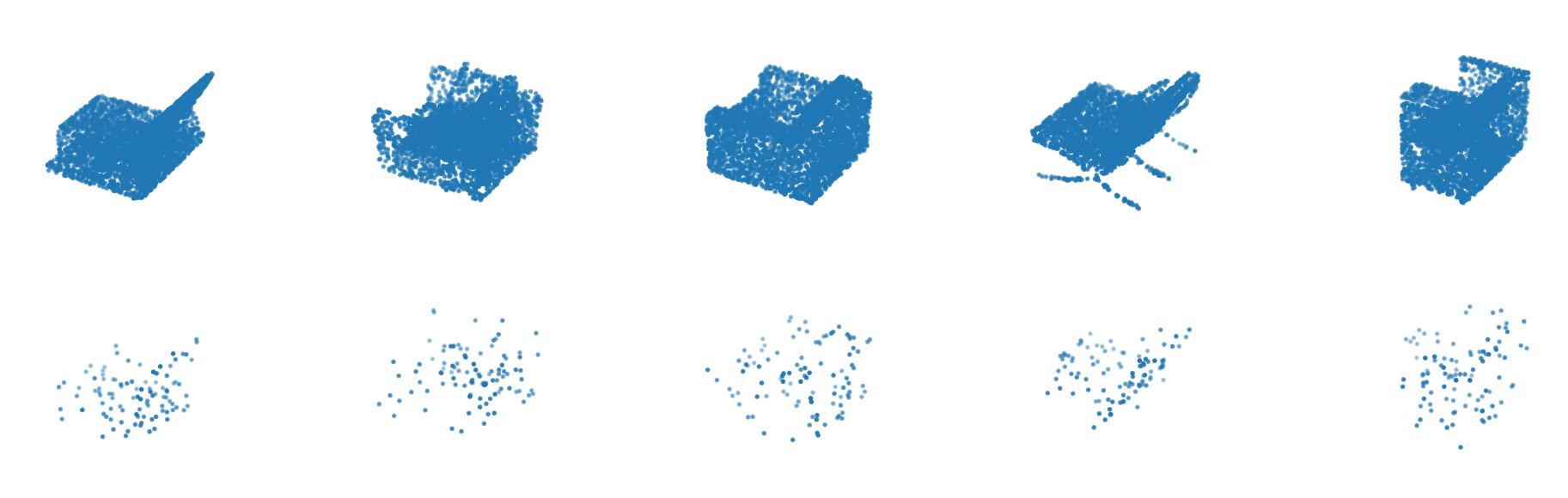} }}%
    \caption{Example point clouds from ModelNet10: (Top) original; (Bottom) downsampled with 100 random sampling points per point cloud.}%
    \label{fig:modelnet}%
\end{figure}

In the second experiment, our goal is to learn a rotation-, reflection-, and permutation-invariant function on pairs of point clouds to approximate a certain target function given in \cite{memoli2011gromov}, which is a lower bound on the Gromov-Wasserstein distance (GW). We make use of the $\mathrm O(d)$-Invariant DeepSet (OI-DS) defined in \eqref{eqn:model.pointcloud} as a feature map, followed by a distance prediction module. Specifically, given a pair of point clouds $(V, V')$, the OI-DS model outputs $\left( \text{OI-DS}_{\theta}(V), \text{OI-DS}_{\theta}(V') \right) \in (\RR^t, \RR^t)$ where $\theta$ denotes the learnable weights and $t$ is the output dimension (an engineering choice). Subsequently, we predict the Gromov-Wasserstein distance by learning
\begin{align*}
    \widehat{\text{GW}}(V, V') = a \left\| W \left(\text{OI-DS}_{\theta}(V) - \text{OI-DS}_{\theta}(V') \right) \right\|^2 + b,
\end{align*}
where $W \in \RR^{t \times t}, a \in \RR, b \in \RR$ are learnable weights. Given a set of training data $\left\{(V_i, V_j),  \text{GW}(V_i, V_j) \right\}$, we optimize the OI-DS model and the prediction module end-to-end (i.e., the learnable weights $\theta, W, a, b$) via full-batch gradient descent, and test the prediction performance on an independent test set. 

We consider some small collections of point clouds from ModelNet10, a benchmark dataset for point cloud learning tasks \cite{wu20153d}. The point clouds are generated from sampling on mesh faces of the CAD models for 3D geometric shapes (such as chairs, airplanes, etc). In this experiment, we randomly select $80$ point clouds from Class 2 (Fig~\ref{fig:modelnet}, bottom-left) and $80$ point clouds from Class 7 (Fig~\ref{fig:modelnet}, bottom-right), and randomly split each class into $40$ training samples and $40$ test samples. For each of these $160$ point clouds, we downsample it to $100$ points. For the training (respectively, test) samples, we use all $40 \times 40 = 1600$ cross-class pairs to construct the training (respectively, test) set. We use the Gromov-Wasserstein solver in \texttt{ott-jax}\footnote{Specifically, \texttt{ott.solvers.quadratic.lower\_bound} with source code \href{https://github.com/ott-jax/ott/blob/main/src/ott/solvers/quadratic/lower_bound.py}{here}.} to compute the third lower bound in \cite{memoli2011gromov} as the Gromov-Wasserstein-based distance $\text{GW}(V_i, V_j)$ for $i,j \in \{1,\dots,40\}$. We choose the lower bound distance as the target because it is invariant, deterministic, and computable in practice, whereas the actual GW distance is NP-hard to compute, and the algorithms to approximate it involve randomness depending on an initialization. The target GW-based distances of the training and test set are shown in the left subplots of Figure~\ref{fig:dist_mat} (a) and (b), respectively. 

We investigate both the DS-CI model \eqref{eqn:model} with all $f$s (middle subplots) and the OI-DS model \eqref{eqn:model.pointcloud}, using $k$-means centroids as the equivariant identifiers (right subplots) with $k=d=3$. As shown in Figure~\ref{fig:dist_mat}   and Table \ref{tab:experiments-rank}, both models predict the GW-based distances reasonably well, evidenced by a high rank correlation and low mean squared error. While the mean relative error $\frac{1}{n}\sum_{i=1}^n |\text{GW}_i - \widehat{\text{GW}}_i|/\text{GW}_i$ is relatively high, it is inflated due to many target distances $\text{GW}_i$ being close to zero. The performance degrades mildly when replacing $O(n^2)$ invariant features in DS-CI with $O(n)$ features (for fixed $d$) in OI-DS.

\begin{table}
\caption{Rank correlation, mean squared error (MSE), and relative error of the target and predicted distances.}
\label{tab:experiments-rank}
\centering
\begin{tabular}{lcccccc}
\toprule
	& \multicolumn{3}{c}{\textsc{DS-CI}}	 & \multicolumn{3}{c}{\textsc{OI-DS}}   \\ 
& 	Rank Corr. $\uparrow$ & MSE $\downarrow$ & Rel. Error $\downarrow$   & Rank Corr. $\uparrow$ & MSE $\downarrow$ & Rel. Error $\downarrow$ \\
 \midrule
 Training set & $0.88$& $0.01$ & $0.53$ &  $0.75$ & $0.04$ & $0.88$\\
 Test set & $0.86$ &$0.02$ & $0.60$ & $0.73$& $0.03$ & $0.63$ \\ 
\bottomrule
\end{tabular} 
\end{table}

 \begin{figure}[htb!]%
    \centering
    \subfloat[\centering Training set distances]{{\includegraphics[trim={0 0 0 0.75cm},clip, width=6cm]{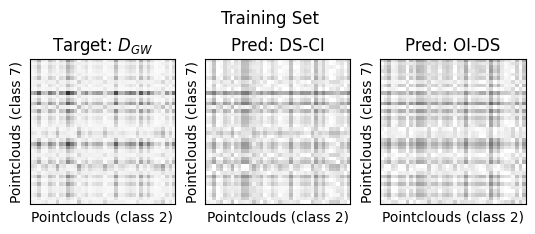} }}%
    \qquad
    \subfloat[\centering Test set distances]{{\includegraphics[trim={0 0 0 0.75cm},clip,width=6cm]{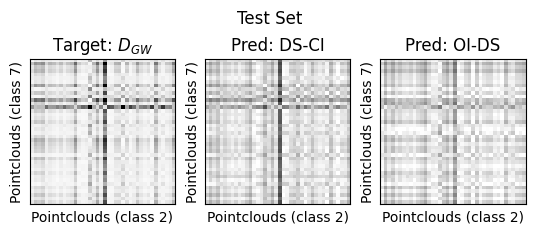} }}%
    \newline
    \subfloat[\centering Training set correlation]{{\includegraphics[trim={0 0 0 0.71cm},clip, width=4cm]{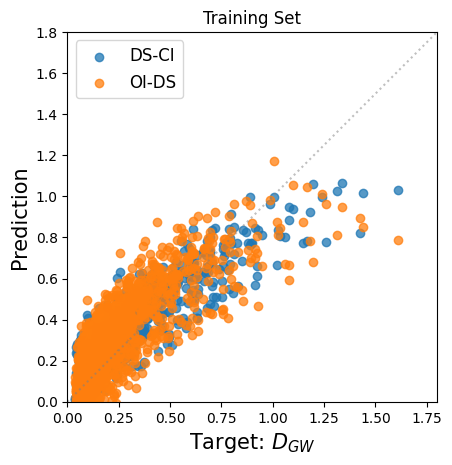} }}%
    \qquad
    \subfloat[\centering Test set correlation]{{\includegraphics[trim={0 0 0 0.71cm},clip,width=4cm]{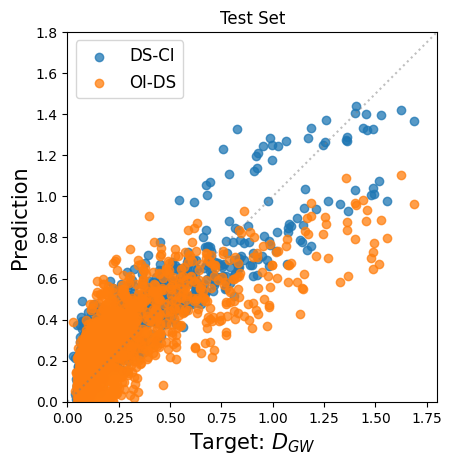} }}%
    \caption{Performance of GW-based distance regression. The cross-class pairwise distances are shown in (a) for the training set, and (b) for the test set. The left, middle, right panel in (a) and (b) corresponds to the target distances, the predicted distances from DS-CI, and the predicted distances from OI-DS, respectively.}%
    \label{fig:dist_mat}%
\end{figure}

\section{Discussion}

In this work,   we provide a general method to extend a set of generically separating invariants of a Lie group acting on a topological measure space by isometries, to a set of generically separating invariants of a subgroup of finite index. The construction is inspired by ideas from Galois theory, and it applies to classes of functions commonly used in machine learning.  

Using this construction,   we provide $O(n^2)$  functions on $n\times n$ symmetric matrices that are invariant with respect to the action of permutations by conjugation, and separate orbits outside a closed measure zero subset of the space of symmetric $n\times n$ matrices (the ``bad set''). Thus they can be used to universally approximate   continuous    invariant functions    in compact sets    outside the same bad set. We also show that when restricted to Gram matrices of point clouds in $\RR^d$, these functions again separate orbits (this time for the natural action of $\mathrm O(d)\times \mfS_n$) outside a ``bad set'' which is closed and measure zero inside the space of point clouds. Again, it follows that these functions can be used to approximate invariant   continuous    functions on point clouds.

To improve efficiency for $d$ fixed and $n$ large, we show how to use $O(n)$ invariant features to approximate any continuous invariant functions on $d$-dimensional point clouds (again   in compact sets    outside a ``bad set'' which is closed and measure zero). We demonstrate the feasibility of our approach via numerical experiments on molecular property regression and point cloud distance prediction. 

Interesting theoretical problems remain open. For instance, characterizing the nature of the ``bad set'' would allow us to understand what functions  these invariants can approximate and what functions they cannot. For this, \cite[Algorithm~3.1]{kemper2007computation} may be useful.  Another promising direction is to extend these ideas to parameterize equivariant functions on point clouds. Applications of this include learning optimal transport maps on point clouds \cite{bunne2022supervised} and equivariant diffusion models for molecule generation \cite{hoogeboom2022equivariant, xu2022geodiff}. Potentially, this could be done in a similar spirit to \cite{villar2021scalars,blum2022machine}, though those techniques would not directly apply here because the construction in \cite{blum2022machine} relies on algebra generators and this work only provides field generators.

On the practical side, it is promising to  further improve our models DS-CI and OI-DS, such as by incorporating domain-specific priors and exploring other engineering choices. One recent idea that promises stable and robust invariant machine learning models is the use of bi-Lipschitz invariants~\cite{cahill2020complete, balan2022permutation, mixon2023max, mixon2022injectivity,cahill2023bilipschitz,cahill2022group,balan2023i, balan2023g, cahill2024stable}. It would be worthwhile to investigate if a transformation that makes our lightweight invariants bi-Lipschitz functions of the data (with respect to the quotient metric) may improve the model performance in practice.

We would like to remark that we were motivated to study this problem by interactions with cosmologists, who craft invariant features to design machine learning models on point clouds (see \cite{villanueva2022learning} and \cite[Chaper 3]{kate}). Cosmological surveys like the ones coming from LSST \cite{ivezic2019lsst} are expected to be have point clouds with about $10^{10}$ points. Therefore scalability is a very important problem. A promising next step is to adapt the OI-DS model to problems in cosmology.

\section*{Acknowledgements}

We thank Alexandra Pevzner (UMN) for research  discussions at the early stages of this project, and for feedback on the proof of Proposition~\ref{thm:low-rank-separating}, including the main idea for the $d=1$ case.
We thank David W. Hogg (NYU), Kate Storey-Fisher (DIPC), Gaby Contardo (SISSA), Francisco Villascusa (Simons Foundation), and Kaze Wong (Flatiron Institute) for giving us context about point clouds in cosmology that inspired this research; some of these conversations took place at the Coworking Retreat on Equivariant Machine Learning held at Johns Hopkins University in March 2023. We also thank Puoya Tabaghi (UCSD) for conversations about \cite{tabaghi2023universal}; Facundo Mémoli (OSU) for conversations about the Gromov-Wasserstein lower bounds; Daniel Packer (OSU) for implementing the Gromov-Wasserstein lower bound for OTT-JAX that we use in our experiments; Tim Carson (Google) for conversations about $k$-means clustering; and Erik Thiede (Cornell) for suggestions on the molecule property regression experiments. SV and BBS were partially supported by ONR N00014-22-1-2126, NSF CCF 2212457, and NSF CAREER 2339682. SV is also funded by the NSF–Simons Research
Collaboration on the Mathematical and Scientific Foundations of Deep Learning
(MoDL) (NSF DMS 2031985), and NSF 2430292.

\appendix

\section{Appendix}

We give the proof of Theorem~\ref{thm:low-rank-separating}, and the original (Galois-theoretic) proof of Theorem~\ref{thm:generic-separators}. For Theorem~\ref{thm:generic-separators}, recall that the theorem states that the $f^d$'s, the $f^o$'s, and $f^\star$, defined in Section~\ref{sec:invariants-on-S(n)}, separate orbits for the action of the symmetric group $\mfS_n$ on the space $\mathcal{S}(n)$ of symmetric matrices, away from a closed, $\mfS_n$-stable, measure zero subset $B$ of ``bad" matrices. The argument makes use of the elementary symmetric polynomials $e^d_1(X),\dots,e^d_n(X)$ in $X_{11},\dots,X_{nn}$, defined by the following formal identity:
\begin{equation}\label{eq:elem-diag}
\prod_{i=1}^n \left(T-X_{ii}\right) = T^n - e^d_1(X)T^{n-1}+\dots + (-1)^ne^d_n(X).
\end{equation}
In other words, they are the coefficients (with alternating signs) of the unique monic polynomial with $X_{11},\dots,X_{nn}$ as roots. (The sign convention in the definition is immaterial to our purposes, but is extremely standard.) We also use the elementary symmetric polynomials $e^o_1(X),\dots,e^o_{n(n-1)/2}(X)$ in $X_{ij}$ for $1\leq i<j\leq n$, defined by the identity
\begin{equation}\label{eq:elem-off-diag}
\prod_{1\leq i < j\leq n}\left(T-X_{ij}\right) = T^{\binom{n}{2}} - e^o_1(X)T^{\binom{n}{2} - 1} + \dots + (-1)^{\binom{n}{2}}e^o_{\binom{n}{2}}(X).
\end{equation}
(As with the $f^d$'s and $f^o$'s, the $d$ and $o$ stand for {\em diagonal} and {\em off-diagonal}. We will refer to the $e^d$'s and $e^o$'s collectively as {\em the $e$'s.})

The argument has three components:
\begin{itemize}
    \item The $f^d$'s contain the same information as the $e^d$'s, thus they have the same separation properties; similarly for the $f^o$'s and the $e^o$'s. This is in essence something that was known by the 18th century, but is noted in a context related to the present one in \cite[Example~2]{olver2023invariants}.
    \item A Galois-theoretic argument, following the main idea of \cite[Theorem~11.2]{thiery2000algebraic}, shows that the $e^d$'s and $e^o$'s (i.e., the $e$'s), together with $f^\star$, generate the field of invariant rational functions.
    \item A (version of a) classical theorem of Rosenlicht \cite[Theorem~2]{rosenlicht1956some} promises the existence of the group-invariant, proper algebraic subvariety $B$ such that generators for the field of invariant rational  functions separate orbits on its complement. 
\end{itemize}
Here are the details.

\begin{proof}[Proof of Theorem~\ref{thm:generic-separators}]
Because $f^d_1(X),\dots,f^d_n(X)$ form a sorted list of the numbers $X_{11},\dots,X_{nn}$, the values of the $f^d$'s determine the values of the $e^d$'s by substituting them into the left side of \eqref{eq:elem-diag} and expanding the product. Similarly, the $e^d$'s determine the values of the $f^d$'s by substituting into the right side of \eqref{eq:elem-diag}, finding the roots of the polynomial, and sorting them. In a similar way, the $f^o$'s and $e^o$'s determine each other by substitution into \eqref{eq:elem-off-diag}. Thus, there is an $f^d$, respectively an $f^o$, that distinguishes two orbits of $\mfS_n$ on $\mathcal{S}(n)$ if and only if there is an $e^d$, respectively an $e^o$, that distinguishes them. 

It follows that the $f^d$'s, the $f^o$'s, and $f^\star$ (taken as a whole) distinguish a pair of orbits if and only if the $e^d$'s, the $e^o$'s, and $f^\star$ (taken as a whole) distinguish the same pair of orbits. Thus, it suffices to prove the theorem with $e^d$'s and $e^o$'s in the place of $f^d$'s and $f^o$'s respectively, and this is what we will do.

\begin{lemma}\label{lem:field-generation}
The $e^d$'s, the $e^o$'s, and $f^\star$, generate the field of rational functions on $\mathcal{S}(n)$ that are invariant under the action of $\mfS_n$ considered here.
\end{lemma}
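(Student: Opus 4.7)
The plan is to realize the action of $\mfS_n$ by conjugation on $\mathcal{S}(n)$ as the restriction to a subgroup $G$ of a larger finite group action by $\Gamma = \mfS_n\times \mfS_{n(n-1)/2}$ that independently permutes the diagonal and off-diagonal entries, and then apply the Galois theory of function fields. Under this setup, $G\subset \Gamma$ is the subgroup of pairs $(\sigma,\tau)$ for which $\tau$ is the permutation of unordered pairs induced by $\sigma$, and $G\cong \mfS_n$ acts by simultaneous row-and-column permutation. Since $\Gamma$ is finite and its action on $\mathcal{S}(n)$ is faithful, the extension $\RR(\mathcal{S}(n))/\RR(\mathcal{S}(n))^\Gamma$ is Galois with group $\Gamma$ (Artin), and every subfield $K$ with $\RR(\mathcal{S}(n))^\Gamma \subseteq K\subseteq \RR(\mathcal{S}(n))$ is of the form $\RR(\mathcal{S}(n))^H$ for a unique subgroup $H\subseteq \Gamma$.

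I would then establish two things. First, that the $e^d$'s and $e^o$'s generate $\RR(\mathcal{S}(n))^\Gamma$ as a field: because the two $\Gamma$-factors act on disjoint sets of variables, the invariant polynomial ring decomposes as a tensor product, and the fundamental theorem of symmetric polynomials identifies each tensor factor as the polynomial ring generated respectively by the $e^d$'s and the $e^o$'s; passing to fraction fields gives the claim. Second, I would compute $\mathrm{Stab}_\Gamma(f^\star)$ and show it equals $G$, which is the main technical step.

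For the stabilizer computation, the expression $f^\star = \sum_{i\neq j} X_{ii}X_{ij}$ is a sum of distinct squarefree quadratic monomials, and its image under $(\sigma,\tau)\in\Gamma$ is $\sum_{i\neq j} X_{\sigma(i)\sigma(i)}X_{\tau(\{i,j\})}$; for this to equal $f^\star$ as a polynomial, each term must itself have the form $X_{kk}X_{k\ell}$, forcing $\sigma(i)\in \tau(\{i,j\})$ for every ordered pair $(i,j)$ with $i\neq j$. Fixing $i$ and letting $j$ range, $\sigma(i)$ lies in each of the $n-1$ pairs $\tau(\{i,j\})$; running the symmetric argument from $j$'s side and comparing the two descriptions of $\tau(\{i,j\})$, I would conclude that $\tau(\{i,j\})=\{\sigma(i),\sigma(j)\}$ for all $i\neq j$, so that $\tau$ is the permutation of unordered pairs induced by $\sigma$, i.e., $(\sigma,\tau)\in G$. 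The reverse inclusion $G\subseteq \mathrm{Stab}_\Gamma(f^\star)$ is immediate because $f^\star$ is built symmetrically from the data of the conjugation action. I expect this combinatorial argument to be the main obstacle: one must verify carefully that the intersection conditions $\sigma(i)\in\tau(\{i,j\})$ across varying $j$ really pin down $\tau$ uniquely, and sanity-check that no degenerate small-$n$ configuration escapes the argument.

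Combining the two facts via Galois correspondence: the intermediate subfield $K := \RR(\mathcal{S}(n))^\Gamma(f^\star)$ is by definition fixed pointwise by exactly those elements of $\Gamma$ that fix both $\RR(\mathcal{S}(n))^\Gamma$ and $f^\star$, namely $\mathrm{Stab}_\Gamma(f^\star)=G$. Therefore $K=\RR(\mathcal{S}(n))^G=\RR(\mathcal{S}(n))^{\mfS_n}$, and since the $e$'s already generate $\RR(\mathcal{S}(n))^\Gamma$, the $e$'s together with $f^\star$ generate the full field of $\mfS_n$-invariant rational functions, as claimed.
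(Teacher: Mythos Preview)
Your proposal is correct and follows essentially the same approach as the paper: embed $G\cong\mfS_n$ in $\Gamma=\mfS_n\times\mfS_{n(n-1)/2}$, identify $\RR(\mathcal{S}(n))^\Gamma$ as the field generated by the $e$'s, compute $\mathrm{Stab}_\Gamma(f^\star)=G$, and conclude via the Galois correspondence. The only cosmetic difference is that the paper phrases the stabilizer computation in terms of $(\sigma,\tau)$ ``preserving the incidence relation'' between diagonal and off-diagonal entries, whereas you argue directly by monomial matching; these are the same argument.
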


The proof of the lemma follows the idea of \cite[Theorem~11.2]{thiery2000algebraic}.

\begin{proof}
    An element of $\mathcal{S}(n)$ is specified by its diagonal and above-diagonal entries because it is symmetric. We take the diagonal entries $X_{ii}$ and above-diagonal entries $X_{ij}$ ($i<j$) as coordinate functions on the real vector space $\mathcal{S}(n)$. The below-diagonal entries $X_{ji}$ ($i<j$) are identified with the above-diagonal entries via $X_{ji}=X_{ij}$. In what follows, we will use $X_{ij}$ and $X_{ji}$ synonymously.

    The action of $\mfS_n$ on $\mathcal{S}(n)$ with which we are concerned in this work applies a permutation $\pi$ to the underlying index set $\{1,\dots,n\}$ over which $i$ and $j$ range, and then acts on the above functions by sending $X_{ij}$ to $X_{\pi(i)\pi(j)}$ (bearing in mind the above identification of each $X_{ij}$ with the corresponding $X_{ji}$).

    Meanwhile, let the group $\Gamma := \mfS_n\times \mfS_{n(n-1)/2}$ act on these functions as follows. Fix some arbitrary order on the above-diagonal entries. Then, given $(\sigma,\tau)\in \Gamma$, permute the diagonal entries $X_{ii}$ according to $\sigma$ and the above-diagonal entries $X_{ij}$ ($i<j$) according to $\tau$ (based on the fixed order). Of course, the below-diagonal entries are permuted correspondingly, since they are identified with the above-diagonal entries.

    The action of $\mfS_n$ on $\mathcal{S}(n)$ with which we are concerned then arises from embedding $\mfS_n$ as that subgroup $G$ of $\Gamma$ consisting of pairs $(\sigma,\tau)$ such that the actions of $\sigma$ and $\tau$ on the $X_{ii}$'s and $X_{ij}$'s, respectively, are induced by the same underlying permutation $\pi$ of the indices $i,j$.

    We claim that a pair $(\sigma,\tau)\in \Gamma$ is induced by the same underlying permutation $\pi$ of the indices---i.e., it lies in $G$---if and only if it preserves the relation of sharing an index between the diagonal entries $X_{ii}$ and the off-diagonal entries $X_{ij}$ ($i\neq j$). Viewing the diagonal entries $X_{ii}$ as corresponding with the nodes of a graph, while the off-diagonal entries correspond with the edges, the claim is that a pair of permutations $\sigma$ of the nodes and $\tau$ of the edges is induced by a single node permutation $\pi$, if and only if the pair $(\sigma,\tau)$ preserves the incidence relation between nodes and edges.
    
    We see this as follows. In one direction, suppose that $\sigma$ and $\tau$ are both induced by $\pi$. Then $\sigma(X_{ii}) = X_{\pi(i)\pi(i)}$, while $\tau(X_{ij}) = X_{\pi(i)\pi(j)}$, so $\sigma(X_{ii})$ and $\tau(X_{ij})$ share the index $\pi(i)$. Conversely, suppose that the pair $(\sigma,\tau)$ preserves the relation of sharing an index. We first define a permutation $\pi$ by the relation $\sigma(X_{ii}) = X_{\pi(i)\pi(i)}$ (note this is a legitimate definition because $\sigma$ sends diagonal entries to diagonal entries). By construction, $\pi$ induces $\sigma$. Then we verify that $\pi$ induces $\tau$, as follows. We know by assumption that $\tau(X_{ij})$ shares indices with both $\sigma(X_{ii})=X_{\pi(i)\pi(i)}$ and $\sigma(X_{jj})=X_{\pi(j)\pi(j)}$. These indices are distinct since $\sigma$ is a permutation and $i\neq j$ by assumption. But then the 2-set of them must be $\{\pi(i), \pi(j)\}$, by construction of $\pi$. So we conclude that $\tau(X_{ij}) = X_{\pi(i)\pi(j)} $ (invoking, if necessary, the identification of $X_{\pi(i)\pi(j)}$ with $X_{\pi(j)\pi(i)}$). This proves the claim.

    Since $f^\star(X)$ is the sum of all and only those products of a diagonal entry $X_{ii}$ and an off-diagonal entry $X_{ij}$ that share an index, it follows immediately from the just-established claim that an element $(\sigma,\tau)\in \Gamma$ leaves $f^\star(X)$ invariant if and only if it lies in $G$.

    We now involve Galois theory. Let $K := \RR(\{X_{ii}\},\{X_{ij}\})$ be the field of all rational functions on $\mathcal{S}(n)$. By the Fundamental Theorem on Symmetric Rational Functions applied twice, the fixed field of $\Gamma$'s action on $K$ is exactly the subfield generated by the $e^d$'s and $e^o$'s:
    \[
    K^\Gamma = \RR(e^d_1(X),\dots,e^d_n(X),e^o_1(X),\dots,e^o_{n(n-1)/2}(X)).
    \]
    The field extension $K/K^\Gamma$ is Galois, with Galois group $\Gamma$. By the previous paragraph, the elements of $\Gamma$ that pointwise-fix the subfield $K^\Gamma(f^\star(X))$ generated over $K^\Gamma$ by $f^\star(X)$ are exactly those that lie in $G$. It follows from the Fundamental Theorem of Galois Theory that $K^\Gamma(f^\star(X))$ is precisely the fixed field of $G$, and we have
    \begin{align*}
        K^G &= K^\Gamma(f^\star(X))\\
        &= \RR(e^d_1(X),\dots,e^d_n(X),e^o_1(X),\dots,e^o_{n(n-1)/2}(X), f^\star(X)).
    \end{align*}
    This proves the lemma.
\end{proof}

The theorem now follows immediately from \cite[Corollary~3.31]{bandeira2017estimation} (applied with $U$ as the linear span of the $e^d$'s, the $e^o$'s, and $f^\star$). The conclusion is that the map
\begin{align*}
    \mathcal{S}(n) &\rightarrow \RR^{n+\binom{n}{2}+1} \\
    X &\mapsto \left(e^d_1(X),\dots,e^d_n(X),e^o_1(X),\dots,e^o_{n(n-1)/2}(X),f^\star(X)\right)
\end{align*}
defines an $\mfS_n$-invariant polynomial map that separates all of the orbits in the complement of an $\mfS_n$-stable set $B$ of $\mathcal{S}(n)$ cut out by a nontrivial polynomial vanishing condition. (This is essentially {\em Rosenlicht's Theorem} \cite[Theorem~2]{rosenlicht1956some}, see also \cite[Theorem~2.3]{popov1994invariant}.) The set $B$ is closed and measure zero in $\mathcal{S}(n)$, with respect to Lebesgue measure, because vanishing sets of nonzero polynomials have these properties.
\end{proof}

\begin{rem}
    There is, perhaps, a family resemblance between the method of proof of the lemma used here (which follows \cite[Theorem~11.2]{thiery2000algebraic}, as mentioned above), and that of \cite[Theorem~1.2(2)]{balan2022permutation}. The latter, like our use of $f^*$, involves augmenting lists of numbers with additional data in order to force otherwise-independent permutations of the lists to be coordinated with each other. We thank Dustin Mixon for calling our attention to this. The proof of \cite[Theorem~5]{tabaghi2023universal} is another example.
\end{rem}

\begin{rem}\label{rem:rosenlicht}
    Although we are using the black box of \cite[Corollary~3.31]{bandeira2017estimation} in the above, we outline a proof for the interested reader, which will make clear where the set $B$ of ``bad" matrices comes from. The lemma shows that the $e$'s and $f^\star$ generate the field of invariant rational functions on $\mathcal{S}(n)$. This field contains the {\em ring} of invariant {\em polynomial} functions $\RR[\{X_{ii}\},\{X_{ij}\}]^G$. By general theory, this ring is finitely generated as an algebra (e.g., \cite[Theorem~2.2.10]{derksen2015computational} or \cite[Theorem~A.31]{bandeira2017estimation}) and separates all orbits (e.g., \cite[Chapter~3, \S~4, Theorem~3]{onishchik2012lie} or \cite[Theorem~A.32]{bandeira2017estimation}); in particular, there is a finite set of invariant polynomials $F_1,\dots,F_m$ on $\mathcal{S}(n)$  that separates all orbits, namely the algebra generators. Because the $e$'s and $f^\star$ generate the field of invariant rational functions, which contains the $F_j$'s, there exist rational functions $P_1/Q_1,\dots,P_m/Q_m$ in $n + n(n-1)/2 + 1$ indeterminates, such that
\[
F_j(X) = \frac{P_j(e^d_1(X),\dots,e^d_n(X),e^o_1(X),\dots,e^o_{n(n-1)/2}(X), f^\star(X))}{Q_j(e^d_1(X),\dots,e^d_n(X),e^o_1(X),\dots,e^o_{n(n-1)/2}(X), f^\star(X))}
\]
as rational functions, for each $j=1,\dots, m$. Since the $F_j$'s separate all orbits, the $e$'s and $f^\star$ must separate orbits as well, as long as none of the denominators in these expressions vanish. Thus, $B$ can be taken to be the locus of those $X$ in $\mathcal{S}(n)$ for which any $Q_j(e^d_1(X),\dots,e^d_n(X),e^o_1(X),\dots,e^o_{n(n-1)/2}(X), f^\star(X))$ vanishes.
\end{rem}

We turn to Theorem~\ref{thm:low-rank-separating}. The idea is essentially the same as for Theorem~\ref{thm:generic-separators}; however, we need a little bit of additional work to handle the fact that the Gram matrices of point clouds of dimension $d<n$ belong to a small proper subvariety of $\mathcal{S}(n)$, in general much smaller than the bad set $B$ of Theorem~\ref{thm:generic-separators}.

\begin{proof}[Proof of Theorem~\ref{thm:low-rank-separating}]

The argument has the same structure as the proof of Proposition~\ref{thm:generic-separators}. We replace the $\tilde f^d$'s and $\tilde f^o$'s with $\tilde e^d$'s and $\tilde e^o$'s, where, as in Section~\ref{sec:inv-clouds}, the tilde indicates composition with the ``Gram matrix" map 
\begin{align*}
\widehat \gamma: \RR^{d\times n} & \rightarrow \mathcal S(n)\\
V&\mapsto V^\top V.
\end{align*}
For example,
\[
\tilde f^\star = \sum_{i\neq j}(v_i^\top v_i)(v_i^\top v_j),
\]
and
\[
\tilde e^d_3 = \sum_{i<j<k}(v_i^\top v_i)(v_j^\top v_j)(v_k^\top v_k).
\]
The $\tilde e$'s contain the same information, and thus have the same power to separate orbits, as the $\tilde f^d$'s and $\tilde f^o$'s. Thus, if we can show Theorem~\ref{thm:low-rank-separating} holds with the $\tilde e$'s in the place of the $\tilde f^o$'s and $\tilde f^d$'s, then it also holds as written.

To obtain a version of Theorem~\ref{thm:low-rank-separating} with the $\tilde e$'s in place of the $\tilde f^o$'s and $\tilde f^d$'s, our goal will be to show that $\tilde f^\star$ and the $\tilde e$'s generate the field of rational functions on $\RR^{d\times n}$ that are invariant under the $\mathrm O(d)\times \mfS_n$ action. The desired statement---i.e., the existence of a closed, measure-zero set $\tilde B\subset \RR^{d\times n}$ on the complement of which any two $\mathrm O(d)\times \mfS_n$-orbits are separated by some $\tilde f$---then follows by \cite[Corollary~3.31]{bandeira2017estimation}  (or by Rosenlicht's theorem \cite[Theorem~2]{rosenlicht1956some}, or by the proof sketch in Remark~\ref{rem:rosenlicht}, which goes through unchanged in this setting). 

We turn to the task of establishing that $\tilde f^\star$ and the $\tilde e$'s  generate the field of $\mathrm O(d)\times \mfS_n$-invariant rational functions on $\RR^{d\times n}$.  We work with the induced action of $\mfS_n$ on the field of rational $\mathrm O(d)$-invariants.

Let $R:=\RR[\{X_{ii}\},\{X_{ij}\}]$ be the ring of polynomial functions on $\mathcal{S}(n)$. Let $I_{d+1}$ be the ideal in $R$ generated by the $(d+1)\times (d+1)$ minors, and consider the quotient ring $R/I_{d+1}$. We use overlines to indicate the images of elements of $R$ in $R/I_{d+1}$, for example $\overline X_{ii}, \overline f^\star, \overline e^o_{ij}\in R/I_{d+1}$.

There is a natural map
\begin{align*}
\widehat \gamma^*:R &\rightarrow \RR[\RR^{d\times n}]\\
\overline X_{ij} &\mapsto v_i^\top v_j
\end{align*}
that dualizes $\widehat \gamma$. Here, $v_i$ and $v_j$ are the $i$th and $j$th columns of the matrix $V\in \RR^{d\times n}$; $i$ may equal $j$ or not. Reflecting the permutation-equivariance of $\widehat \gamma$, this map is evidently equivariant with respect to the action of $\mfS_n$ on $R$ by simultaneous row and column permutation, and on $\RR[\RR^{d\times n}]$ by column permutation. Furthermore, it lands inside the invariant ring $\RR[\RR^{d\times n}]^{\mathrm O(d)}$, reflecting the $\mathrm O(d)$-invariance of $\widehat \gamma$. By the First \cite[Theorem~2.9.A]{weyl1946classical} and Second \cite[Theorem~2.17.A]{weyl1946classical} Fundamental Theorems for the invariant theory of $\mathrm O(d)$, the ring $\RR[\RR^{d\times n}]^{\mathrm O(d)}$ of $\mathrm O(d)$-invariant polynomial functions on $\RR^{d\times n}$ is isomorphic to $R/I_{d+1}$, and in fact, the isomorphism is induced by $\widehat \gamma^*$. In other words, $\ker \widehat \gamma^* = I_{d+1}$ and $\widehat \gamma^*$ is surjective onto $\RR[\RR^{d\times n}]^{\mathrm O(d)}$. Abusing notation, we denote the induced $\mfS_n$-equivariant isomorphism
\[
R/I_{d+1}\xrightarrow{\cong} \RR[\RR^{d\times n}]^{\mathrm O(d)}
\]
also by $\widehat \gamma^*$. Taking $\mfS_n$-invariants, we get an isomorphism
\begin{equation}\label{eq:gram-iso}
(R/I_{d+1})^{\mfS_n} \xrightarrow{\cong} \RR[\RR^{d\times n}]^{\mathrm O(d)\times \mfS_n},
\end{equation}
which we again call $\widehat \gamma^*$. Note that we have
\[
\widehat \gamma^*(\overline f^\star) = \tilde f^\star, \ \widehat \gamma^*(\overline e_k^d) = \tilde e_k^d,\text{ and }\widehat \gamma^*(\overline e_\ell^o)=\tilde e_\ell ^o
\]
for $k=1,\dots,n$ and $\ell = 1,\dots,\binom{n}{2}$.

For a linear action of a compact Lie group on a real vector space, the fraction field of the invariant ring is equal to the field of invariant rational functions (see for example the proof of \cite[Proposition~3.19]{bandeira2017estimation}). Thus, invoking the isomorphism \eqref{eq:gram-iso}, we may prove the desired result about $\tilde f^\star$ and the $\tilde e$'s generating the field of $\mathrm O(d)\times \mfS_n$-invariant rational functions on $\RR^{d\times n}$ by instead working with $\overline f^\star$ and the $\overline e$'s, and showing they generate the fraction field of $(R/I_{d+1})^{\mfS_n}$. Let $L :=\operatorname{Frac}(R/I_{d+1})$ be the fraction field of $R/I_{d+1}$. For finite groups, taking invariants commutes with taking fraction fields, so the field we hope to generate with these invariants is $L^{\mfS_n}$.

The action of $\mfS_n$ on $L$ in the previous paragraph is by simultaneous row and column permutations of the matrix $\overline X := (\overline X_{ij})$. In the proof of Theorem~\ref{thm:generic-separators} (in particular, in Lemma~\ref{lem:field-generation}), we used the symbol $G$ to refer to $\mfS_n$ in the analogous action on $R$, to distinguish it as a particular subgroup of the permutation group $\Gamma\cong \mfS_n\times \mfS_{n(n-1)/2}$ which acts independently on the $X_{ii}$'s and $X_{ij}$'s. We retain that notation here; in this language, what we must do is to show that $\overline f^\star$ and the $\overline e$'s generate $L^G$ as a field over $\RR$.

Let $K$ be the subfield of $L$ generated over $\RR$ by the $\overline e$'s. Since the $\overline e$'s are $G$-invariant, we have $K\subset L^G$. The univariate polynomials
\[
P:=T^n - \overline e^d_1T^{n-1}+\dots+ (-1)^n \overline e^d_n
\]
and
\[
Q:=T^{n(n-1)/2}-\overline e^o_1T^{n(n-1)/2-1}+\dots+(-1)^{n(n-1)/2}\overline e^o_{n(n-1)/2}
\]
have coefficients in $K$, and distinct roots $\{\overline X_{ii}\}$ and $\{\overline X_{ij}\}$ in $R/I_{d+1}$ which generate the latter as a ring over $\RR$, and thus $L$ as a field over $K\supset \RR$. Therefore, $L$ is a splitting field for the separable polynomial $PQ$ over $K$; in particular, $L/K$ is a Galois extension. Because $K\subset L^G$, we have $G\subset \operatorname{Gal}(L/K)$. Since $\overline f^\star$ is $G$-invariant, $K(\overline f^\star)\subset L^G$. Our goal is to show that $K(\overline f^\star)$ is equal to $L^G$, i.e., that $K(\overline f^\star)$ is the subfield corresponding to $G\subset \operatorname{Gal}(L/K)$ in the Galois correspondence.

Now the permutation group $\Gamma$ defined in the proof Lemma~\ref{lem:field-generation} acts on the ring $R$, but it does not in general stabilize the ideal $I_{d+1}$, so it does not act on $R/I_{d+1}$ or $L$. However, any Galois automorphism of $L/K$ fixes the coefficients of the polynomials $P$ and $Q$, and thus it separately permutes their roots. Since these roots generate $L$ over $K$, it follows that the Galois group $\operatorname{Gal}(L/K)$ is in a natural way a {\em subgroup} of $\Gamma$ (that contains $G$).\footnote{It is actually plausible that, for $d$ not too close to $0$ or $n$, $\operatorname{Gal}(L/K)$ is already equal to $G$, i.e., the only elements of $\Gamma$ that fix $I_{d+1}$ are those belonging to $G$ already. A closely related statement is proven in \cite[Lemma~2.4]{boutin2004reconstructing}. If this holds, it would mean that the field $K$ generated by the $\overline e$'s is already equal to $L^G$ without requiring the adjunction of $\overline f^\star$.}

Since we already know $K(\overline f^\star)\subset L^G$, it will follow from the Fundamental Theorem of Galois Theory that $K(\overline f^\star)=L^G$  if we show that $\overline f^\star$ is fixed by {\em only} those elements of $\operatorname{Gal}(L/K)$ lying in $G$. Since we have $\operatorname{Gal}(L/K)\subset \Gamma$, this in turn will follow if we show that for $\gamma \in \Gamma$, \[
\gamma f^\star - f^\star \in I_{d+1}
\]
implies $\gamma\in G$. We already know from the proof of Lemma~\ref{lem:field-generation} that $\gamma f^\star - f^\star = 0$ implies $\gamma \in G$; we proceed by showing that $\gamma f^\star - f^\star \in I_{d+1}$ implies that $\gamma f^\star - f^\star = 0$.

If $d>1$, this follows from the fact that $I_{d+1}$ is a homogeneous ideal of $R$ generated by forms of degree $d+1>2$, so the intersection of $I_{d+1}$ with the degree 2 component $R_2$ of $R$ is trivial. Meanwhile, $\gamma f^\star - f^\star$ is homogeneous of degree $2$. Thus, in this case,
\[
\gamma f^\star - f^\star \in I_{d+1} \Rightarrow \gamma f^\star - f^\star \in I_{d+1}\cap R_2 = \{0\}.
\]

If $d=1$, we need only slightly more work. In this case, the ideal $I_{d+1}=I_2$ is generated by binomials of the form
\[
X_{ij}X_{k\ell} - X_{i\ell}X_{kj},
\]
where some of $i,j,k,\ell$ may be equal; in fact, a subset of these binomials forms a Gr\"{o}bner basis for $I_2$ \cite{conca1994symmetric}. Since the multisets of indices appearing in the monomials $X_{ij}X_{k\ell}$ and $X_{i\ell}X_{kj}$ are equal, it follows that reduction to normal form with respect to this Gr\"{o}bner basis cannot change the multiset of indices appearing in any quadratic monomial. In particular, if $\gamma f^\star - f^\star \in I_2$, then every monomial appearing in $\gamma f^\star$ has the same multiset of indices as some monomial appearing in $f^\star$. But the monomials of $f^\star = \sum_{i\neq j} X_{ii}X_{ij}$ exhaust the set of quadratic monomials whose multiset of indices has the form $\{i,i,i,j\}$ with $i\neq j$. So $\gamma f^\star - f^\star \in I_2$ implies that each term of $\gamma f^\star$ is a term of $f^\star$, in which case  
the set of terms in $\gamma f^\star$ and in $f^\star$ coincide as sets. So actually, $\gamma f^\star - f^\star \in I_2$ implies $\gamma f^\star - f^\star = 0$ after all. This completes the $d=1$ case, and thus the proof.
\end{proof}

 We isolate the invariant-theoretic content of this proof:

\begin{corollary}\label{cor:pt-cloud-field-generators}
    The $\tilde e$'s and $\tilde f^\star$  generate the field
    \[
    \RR\left(\RR^{d\times n}\right)^{\mathrm{O}(d)\times \mfS_n}
    \]
    of rational functions on $\RR^{d\times n}$ invariant under the left action by $\mathrm{O}(d)$ and the right action by $\mfS_n$.\qed
\end{corollary}

We thank Alexandra Pevzner for the main idea for handling the $d=1$ case in  this proof  .

\bibliographystyle{alpha}
\bibliography{biblio}

\end{document}